\def\Figref#1{Figure~\ref{#1}}
\def\secref#1{section~\ref{#1}}
\def\eqref#1{equation~\ref{#1}}
\def\1{\bm{1}}
\DeclareMathAlphabet{\mathsfit}{\encodingdefault}{\sfdefault}{m}{sl}
\SetMathAlphabet{\mathsfit}{bold}{\encodingdefault}{\sfdefault}{bx}{n}
\newcommand{\R}{\mathbb{R}}
\pgfplotsset{compat=1.15}
\g@addto@macro\normalsize{%
	\setlength\abovedisplayskip{5pt}
	\setlength\belowdisplayskip{2pt}
	\setlength\abovedisplayshortskip{2pt}
	\setlength\belowdisplayshortskip{2pt}
}
\definecolor{turquoise}{cmyk}{0.65,0,0.1,0.1}
\definecolor{purple}{rgb}{0.65,0,0.65}
\definecolor{dark_green}{rgb}{0, 0.5, 0}
\definecolor{orange}{rgb}{0.8, 0.6, 0.2}
\definecolor{red}{rgb}{0.8, 0.2, 0.2}
\definecolor{brown}{rgb}{0.5, 0.16, 0.16}
\renewcommand{\vec}[1]{{\mathbf{#1}}}
\newcommand{\p}{\vec{p}}
\newcommand{\n}{\vec{n}}
\newcommand{\x}{\vec{x}}
\newcommand{\y}{\vec{y}}
\renewcommand{\P}{\mathcal{P}}
\newcommand{\T}{\mathcal{T}}
\renewcommand{\S}{\mathcal{S}}
\newcommand{\N}{\mathcal{N}}
\renewcommand{\L}{\mathcal{L}}
\providecommand{\eg}[0]{\textit{e.g}\xperiod}
\providecommand{\ie}[0]{\textit{i.e}\xperiod}
\providecommand{\cf}[0]{\textit{c.f}\xperiod}
\providecommand{\wrt}[0]{\textit{w.r.t}\xperiod}
\newcommand{\siren}{\textsc{siren}\xspace}
\newcommand{\sirens}{\textsc{siren}s\xspace}
\newtheorem{theorem}{Theorem}
\newtheorem{corollary}{Corollary}
\newtheorem{definition}{Definition}
\title{Geometry-Consistent Neural Shape \\Representation with \\Implicit Displacement Fields}
\author{%
  Wang Yifan \\
  ETH Zurich\\
  \texttt{ywang@inf.ethz.ch} \\
  \And
  Lukas Rahmann \\
  ETH Zurich \\
  \texttt{lukas.rahmann@gmail.com} \\
  \And
  Olga Sorkine-Hornung \\
  ETH Zurich\\
  \texttt{sorkine@inf.ethz.ch}
}
\begin{document}

\maketitle

\begin{abstract}
We present \emph{implicit displacement fields},
a novel representation for detailed 3D geometry.
Inspired by a classic surface deformation technique, displacement mapping, our method
represents a complex surface as a smooth base surface plus a displacement
along the base's normal directions, resulting in a frequency-based shape decomposition,
where the high-frequency signal is constrained geometrically by the low-frequency signal.
Importantly, this disentanglement is unsupervised thanks to a tailored architectural design that has an innate frequency hierarchy by construction.
We explore {implicit displacement field} surface reconstruction and detail transfer
and demonstrate superior representational power, training stability, and generalizability.
Code and data available at: \url{https://github.com/yifita/idf}
   \end{abstract}

\section{Introduction}\label{sec:intro}
Neural implicit functions have emerged as a powerful tool for representing a variety of signals.
Compared to conventional discrete representations, neural implicits are continuous and
thus not tied to a specific resolution.
Recently, neural implicits have gained significant attraction in a variety of applications ranging from
3D reconstruction~\citep{sitzmann2019scene,niemeyer2020differentiable,yariv2020multiview,Peng2020ECCV},
neural rendering~\citep{mildenhall2020nerf, pumarola2020d}, image translation~\citep{skorokhodov2020adversarial,chen2020learning}
to deformation approximation~\citep{deng2019nasa}.
In this paper, we focus on neural implicit representations for 3D geometry.

While neural implicits can theoretically model geometry with infinite resolution,
in practice the output resolution is dependent on 
the representational power of neural nets.
So far, the research community approaches the problem from two main directions.
The first is to partition the implicit function using spatial structures \citep{chabra2020deep,jiang2020local, NEURIPS2020_b4b75896, takikawa2021nglod}, thus making the memory  and computation
\begin{wrapfigure}{r}{0.4\textwidth}
    \centering
    \includegraphics[width=0.30\textwidth]{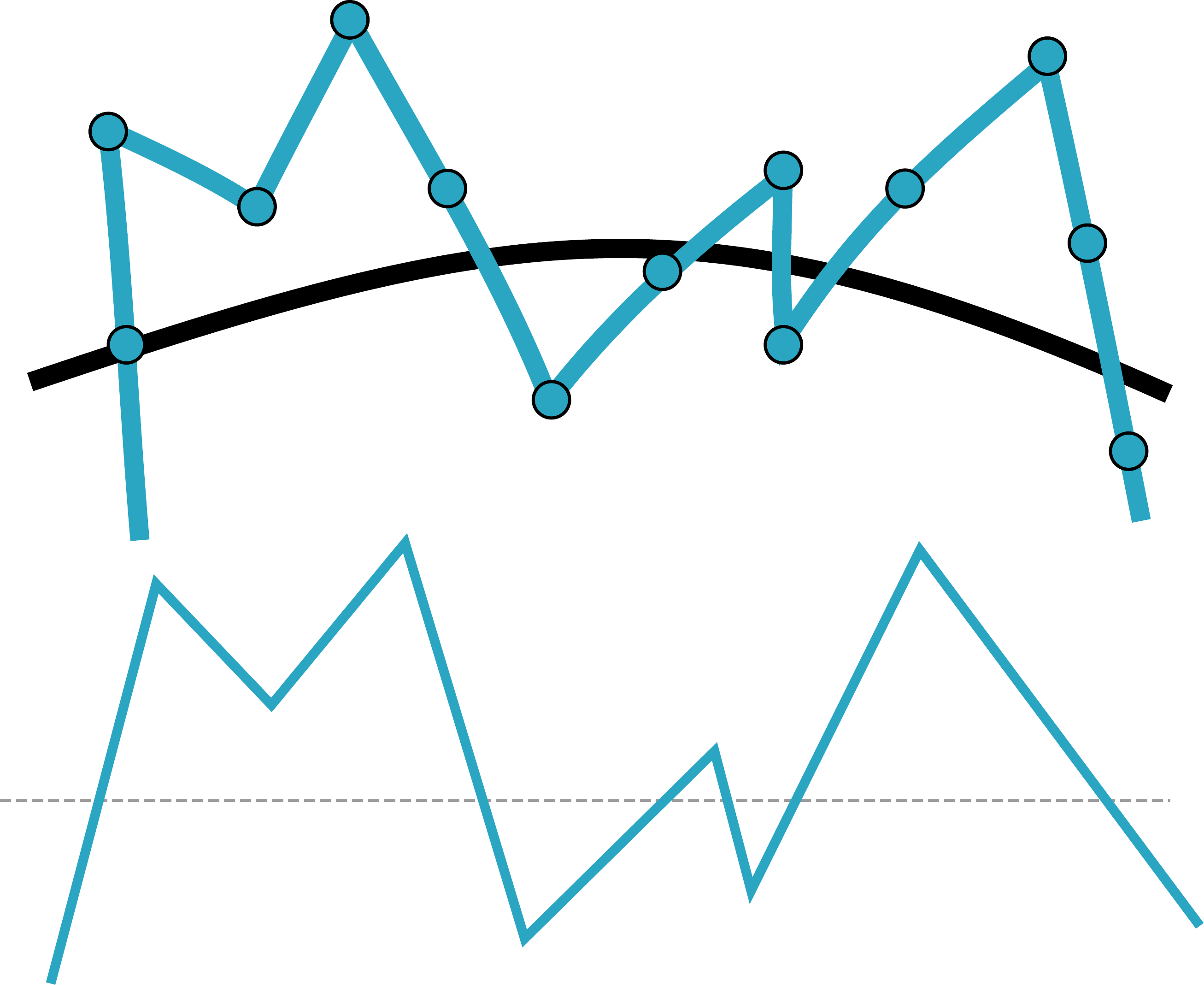}
    \caption{Displacement mapping in 1D. The detailed surface (upper blue) is created by offsetting
    samples of the base surface (upper black) using the height map shown below.}
    \label{fig:heightmap}
\end{wrapfigure}
demands dependent on the geometric complexity.
The other direction focuses on improving networks' ability to represent high-frequency signals, either
in a preprocessing step (referred to as positional encoding) \citep{mildenhall2020nerf} or
by using sinusoidal representation networks (\siren) \citep{sitzmann2020implicit}.
However, training these networks is very challenging, as they are prone to overfitting and optimization local minima.

Inspired by the classic computer graphics technique, displacement mapping \citep{cook1984shade,cook1987reyes},
we propose a novel parameterization of neural implicit functions,
\textit{implicit displacement field}, abbreviated as IDF, to circumvent the above issues.
Our method automatically disentangles a given detailed shape  into a coarse base shape
represented as a continuous, low-frequency signed distance function 
and a continuous high-frequency implicit displacement field,
which offsets the base iso-surface along the normal direction.

The key novelty of our approach lies in extending the classic displacement mapping, which is discrete and lies only on the base surface,
to a continuous function in the \(\R^{3}\) domain and incorporating it into contemporary neural implicit representations,
ergo achieving a disentanglement of geometric details in an unsupervised manner.

Our main technical contribution includes
\begin{enumerate}
    \item a principled and theoretically grounded extension of explicit discrete displacement mapping to the implicit formulation,
    \item a neural architecture that creates a geometrically interpretable frequency hierarchy
    in the neural implicit shape representation by exploiting the inductive bias of \sirens, and
    \item introducing transferable implicit displacement fields by replacing the common coordinates input with carefully constructed transferrable features,
    thus opening up new opportunities for implicit geometry manipulation and shape modeling.
\end{enumerate}

Systematic evaluations show that our approach is significantly more powerful in representing geometric details,
while being lightweight and highly stable in training.

\section{Related work}\label{sec:related}

\paragraph{Hierachical neural implicit shape representation.}
Neural implicit shape representation was initially proposed by several works concurrently
\citep{park2019deepsdf,chen2019learning,mescheder2019occupancy}, and since then
many works have sought to introduce hierarchical structures into the neural representation
for better expressiveness and generalizability.
The majority of these methods focus on spatial structures.
\citet{chabra2020deep, saito2019pifu,saito2020pifuhd} use sparse regular
voxels and dense 2D grid, respectively, to improve detail reconstruction.
In the spirit of classic approaches, \eg \cite{frisken2000adaptively,ohtake2005multi},
\citet{NEURIPS2020_b4b75896,takikawa2021nglod, martel2021acorn} store learned latent codes
in shape-adaptive octrees, leading to significantly higher reconstruction quality
and increased rendering speed.
A common disadvantage of these methods is that the memory use and model complexity are directly tied
to the desired geometric resolution.
In parallel, other proposed methods learn the spatial partition.
Some of these methods decompose the given shape using parameterized templates, such as
anisotropic Gaussians~\citep{genova2019learning}, convex shape CVXNet~\citep{deng2020cvxnet,chen2020bsp}
or simple primitives~\citep{hao2020dualsdf}, while others represent local shapes with small
neural networks and combine them together either using Gaussians~\citep{genova2020local}
or surface patches~\citep{tretschk2020patchnets}.
Due to limitations of template functions and delicate spatial blending issues,
these methods can only handle very coarse geometries.

Concurrently, \citet{li2021dimnet} propose a two-level neural signed distance function for single-view reconstruction.
Exploiting the fact that most man-made shapes have flat surfaces,
it represents a given shape as a coarse SDF plus a frontal and rear implicit displacement map for better detail construction.
Besides having entirely different applications
-- we focus on representing significantly higher geometry resolutions --
our implicit displacement is grounded in geometry principles and applies to general shapes.

\paragraph{High-frequency representation in neural networks}
As formally explained by \citet{xu2018understanding,xu2019training,rahaman2019spectral,basri2020frequency},
neural networks have a tendency to learn low-frequency functions.
To combat this issue, \citet{mildenhall2020nerf} incorporate ``positional encoding'' for neural rendering
and demonstrate remarkable progress in terms of detail reconstruction,
which is a sinusoidal mapping for the input signal, a practice later theoretically justified by \citet{tancik2020fourier}.
Alternatively, \siren also shows impressive advances in detail representation
by replacing \texttt{ReLU} activation with \(\sin\) functions.
With these new networks gaining popularity, a few works delve deeper and apply a coarse-to-fine frequency hierarchy
in the training process for deformable shape representation~\citep{park2020nerfies} and meshing~\citep{hertz2021progressive}.
In our method, we also create a frequency hierarchy by leveraging this new form of networks --
not only in the training scheme but also explicitly in the construction of the networks
to reflect our geometry-motivated design principles.

\paragraph{Detail transfer}
Detail transfer refers to transplanting the disentangled geometric details from a source shape onto a target object with high fidelity and plausibility.
Classic detail transfer methods represent surface details as normal displacements~\citep{PMP_Book,zhou2007terrain,DeformationTutorial:2009}.
The majority of them are parametric~\citep{ying2001texture,biermann2002cut,sorkine2004laplacian,zhou2006mesh,takayama2011geobrush},
relying on a consistent surface parameterization between the source and the target shape.
Non-parametric approaches~\citep{chen2012non,berkiten2017learning}, on the other hand, find best-matching surface patches
between the source and target, and copy the details iteratively from coarse to fine.
These classic approaches produce high quality results, but often require a pre-defined base surface or abundant user inputs.
In the ``deep'' realm, DeepCage~\citep{yifan2020neural} proposed a neural deformation method that maps solely the coarse geometry,
hence allowing detail transfer without tackling detail disentanglement.
\citet{hertz2020deep} learn the coarse-to-detail correspondence iteratively from multi-scale training data,
while \citet{chen2021decorgan} synthesizes details by upsampling a coarse voxel shape according to a style code of another shape using GANs.
All of these approaches use explicit representations, hence they are subject to self-intersection and resolution limitations.
D\({}^{2}\)IM-Net~\citep{li2021dimnet} uses two planar displacement maps to transfer surface details by mapping the coordinates of the source and target shapes using part segementation, thus limiting the application to man-made rigid shapes.
In comparison, our method does not require any correspondence mapping.

\begin{figure}[tbp]
    \centering
    \includegraphics[width=\textwidth]{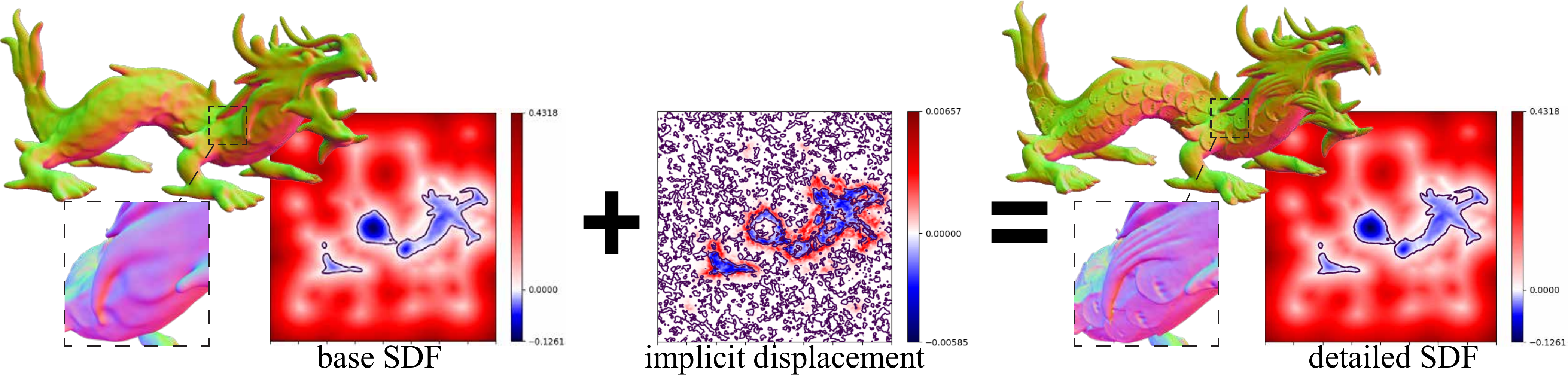}
    \caption{Method overview.
We represent detailed geometries as a sum of a coarse base shape represented as low-frequency signed distance function and
a high-frequency implicit displacement field, which offsets the base iso-surface along the base's normal directions.}
    \label{fig:overview}
\end{figure}
\section{Method}\label{sec:method}
\begin{wrapfigure}{R}{0.4\textwidth}\centering
    \includegraphics[width=0.4\textwidth]{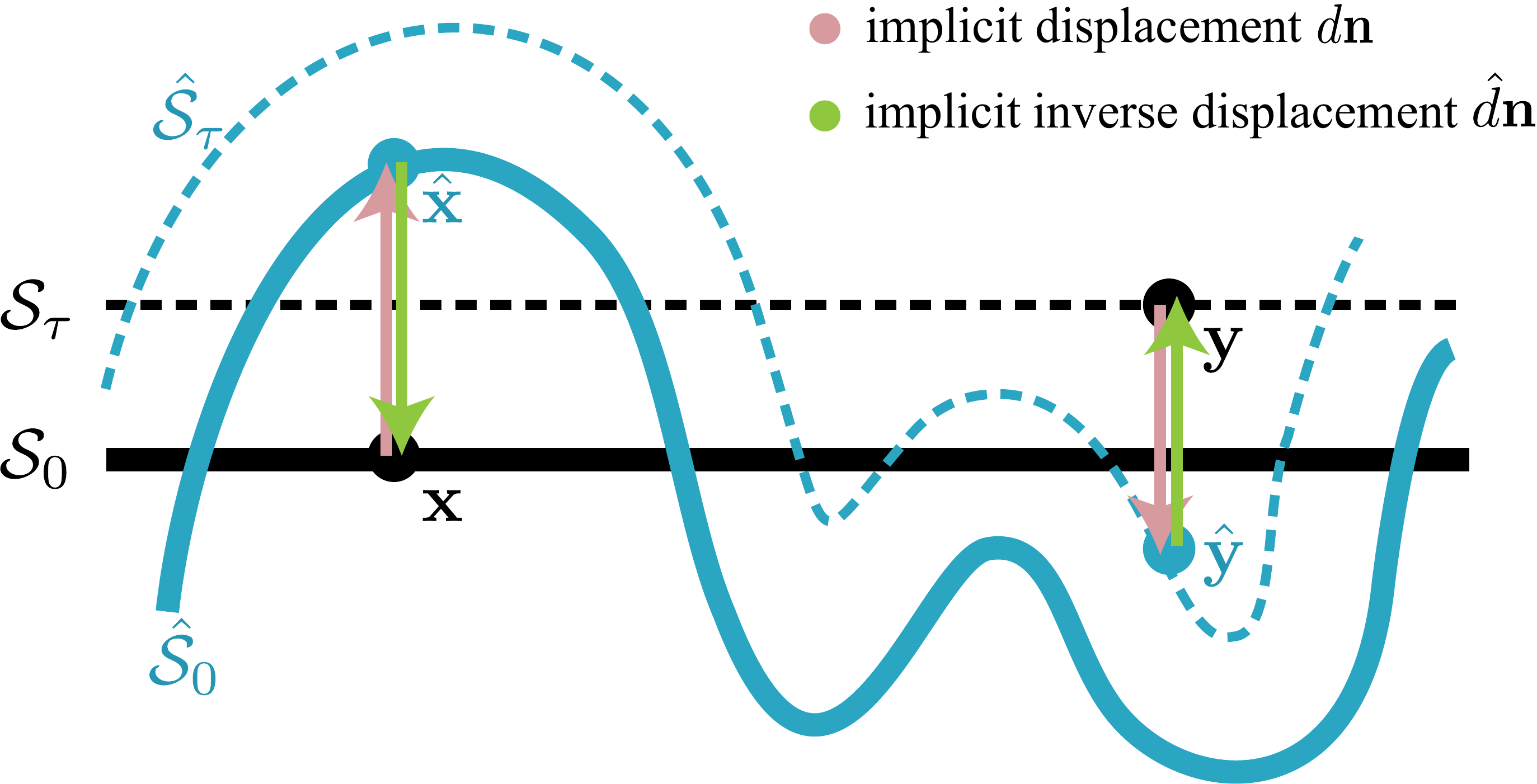}
    \caption{An implicit displacement field for a 1D-curve.
    The displacement is defined not only on the zero-isosurface \(S_{0}\) but also on arbitrary isosurfaces \(S_{\tau}\)}\label{fig:idf}
\end{wrapfigure}
We represent a shape with fine geometric details using two \siren networks of different frequencies in the activation functions.
The \siren with lower frequency describes a smooth base surface;
the \siren with higher frequency adds microstructure to the base iso-surface by producing an implicit displacement field along the base's normal direction
(see \Figref{fig:overview}).

In this section, we first formally define implicit displacement field by generalizing the classic explicit and discrete displacement mapping in Sec~\ref{sec:theory},
then in Sec~\ref{sec:network} we introduce the network architectures and training strategies that are tailored to this definition,
finaly in Sec~\ref{sec:transfer} we extend the implicit displacement to address transferability.

\subsection{Implicit displacement fields}\label{sec:theory}
In classic displacement mapping as shown in \Figref{fig:heightmap}, high-frequency geometric details are obtained on a smooth base surface
by taking samples from the base surface and offsetting them along their normal directions by a distance obtained (with interpolation) from a discrete height map.
Two elements in this setting impede a direct adaptation for implicit shape representation:
\begin{inparaenum}
\item the displacement mapping is defined discretely and only on the base surface, whereas implicit surface
functions are typically defined continuously on the \(\R^3\) domain;
\item the base surface is known and fixed, whereas our goal is to learn the base surface and the displacement jointly on-the-fly.
\end{inparaenum}

Addressing the above challenges, we first define \emph{implicit displacement fields} (IDF), which are continuous analog to height maps
that extend displacement mapping to the \(\R^3\) domain.
\begin{definition}\label{def:idf}
Given two signed distance functions \(f\) and \(\hat{f}\) and their respective iso-surfaces at a given value \(\tau\in\R\),
\(\S_{\tau} = \left\{ \x\in \R^3 | f(\x) = \tau \right\}\) and \(\hat{\S}_{\tau} = \left\{ \x\in \R^3 | \hat{f}(\x) = \tau \right\}\),
an \emph{implicit displacement field} d: \(\R^3\to\R\) defines the deformation from \(\S_{\tau} \) to \(\hat{\S}_{\tau} \) such that
\begin{equation}
    f\left( \x \right)=\hat{f}\left( \x+d\left( \x \right)\n\right),\;
    \textrm{where } \ \n = \frac{\nabla f\left( \x \right)}{\|\nabla f\left( \x \right)\|}.\label{eq:forward}
\end{equation}

\end{definition}
This definition is schematically illustrated in \Figref{fig:idf},
where the iso-surface \(\S_{0}\) and \(\S_{\tau}\) are mapped to \(\hat{\S}_{0}\) and \(\hat{\S}_{\tau}\) with the same implicit displacement field \(d\).
Notably, the height map in classic displacement mapping is a discrete sampling of IDF for the limited case
\(\tau=0\).

In the context of surface decomposition, our goal is to estimate the base surface \(f\) and the displacement \(d\) given an explicitly represented detailed surface \(\hat{\S}_{0}\).
Following~\eqref{def:idf},
we can do so by minimizing the difference between the base and the ground truth signed distance
at query points \(\x\in\R^{3}\) and their displaced position \(\hat{\x}=\x+d\left( \x \right)\n\),
\ie, \(\min|f\left( \x \right)-\hat{f}_{\textrm{GT}}\left( \hat{\x} \right)|\).

However, this solution requires evaluating \(\hat{f}_{\textrm{GT}}\left( \hat{\x} \right)\) dynamically at variable positions \(\hat{\x}\),
which is a costly operation as the detailed shapes are typically given in explicit form, \eg, as point clouds or meshes.
Hence, we consider the inverse implicit displacement field \(\hat{d}\), which defines
a mapping from \(\hat{\S}_{\tau}\) to \(\S_{\tau}\),
\(f\left( \hat{\x}+\hat{d}\left( \hat{\x} \right)\n \right) = \hat{f}\left( \hat{\x} \right)\), as depicted in \Figref{fig:idf}.

Assuming the displacement distance is small,
we can approximate \(\n\), the normal after inverse displacement, with that before the inverse displacement, \ie
\begin{equation}
f\left( \hat{\x}+\hat{d}\left( \hat{\x} \right)\hat{\n} \right) = \hat{f}\left( \hat{\x} \right),
\textrm{where \(\hat{\n} = \frac{\nabla f\left( \hat{\x} \right)}{\|\nabla f\left( \hat{\x} \right)\|} \)}.
\label{eq:inverse_final}
\end{equation}
This is justified by the following theorem and corollary,
which we prove in the Appendix~\ref{sec:proof}.
\begin{theorem}\label{th:approx-lipschitz}
If function \(f:\R^{n}\to\R\) is differentiable, Lipschitz-continuous with constant \(L\)
and Lipschitz-smooth with constant \(M\), then
\(\|\nabla f\left( \x+ \delta  \, \nabla f\left( \x \right) \right)  - \nabla f\left( \x \right)\| \leq |\delta| LM \).
\end{theorem}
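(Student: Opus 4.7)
The plan is to chain two standard consequences of the two Lipschitz hypotheses. First I would invoke the Lipschitz-smoothness of $f$ (with constant $M$), which by definition means the gradient map is $M$-Lipschitz: for every $\vx,\vy\in\R^{n}$,
\begin{equation*}
\|\nabla f(\vy) - \nabla f(\vx)\| \leq M\|\vy - \vx\|.
\end{equation*}
Applying this with $\vy = \vx + \delta \nabla f(\vx)$ immediately yields
\begin{equation*}
\|\nabla f(\vx + \delta \nabla f(\vx)) - \nabla f(\vx)\| \leq M \|\delta \nabla f(\vx)\| = |\delta|\, M \,\|\nabla f(\vx)\|.
\end{equation*}

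Second, I would use the Lipschitz continuity of $f$ (with constant $L$) to bound $\|\nabla f(\vx)\|$ uniformly by $L$. The standard argument is to pick any unit vector $\vu$ and apply the definition of the directional derivative: since $f$ is differentiable,
\begin{equation*}
\vu^{\tr}\nabla f(\vx) = \lim_{t\to 0} \frac{f(\vx + t\vu) - f(\vx)}{t},
\end{equation*}
and $L$-Lipschitz continuity bounds the right-hand side in absolute value by $L$; taking $\vu = \nabla f(\vx)/\|\nabla f(\vx)\|$ (when the gradient is nonzero) gives $\|\nabla f(\vx)\| \leq L$.

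Combining the two bounds yields $\|\nabla f(\vx + \delta \nabla f(\vx)) - \nabla f(\vx)\| \leq |\delta|\,LM$, which is the claim. There is no real obstacle here: the only subtlety worth spelling out is the (well-known) implication ``$L$-Lipschitz $+$ differentiable $\Rightarrow\|\nabla f\|\leq L$ pointwise'', which I would state as a one-line lemma before the main estimate to keep the argument self-contained.
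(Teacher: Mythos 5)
Your proof is correct and follows the same route as the paper: apply $M$-Lipschitz-smoothness to bound the gradient difference by $|\delta|\,M\,\|\nabla f(\vx)\|$, then use $L$-Lipschitz continuity to bound $\|\nabla f(\vx)\|\leq L$. The only difference is that you also spell out the standard directional-derivative argument for the implication ``differentiable $+$ $L$-Lipschitz $\Rightarrow \|\nabla f\|\leq L$'', which the paper takes for granted.
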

\begin{corollary}\label{th:approx-sdf}
    If a signed distance function \(f\) satisfying the eikonal equation up to error \(\epsilon > 0\),
    \(\left\lvert\left\lVert\nabla f\right\rVert -1\right\rvert<\epsilon\),
    is Lipschitz-smooth with constant \(M\),
    then \(\|\nabla f\left( \x+ \delta\, \nabla f\left( \x \right) \right)  - \nabla f\left( \x \right)\| < (1+\epsilon)|\delta|M \).
\end{corollary}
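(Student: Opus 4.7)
The plan is to derive the corollary as an immediate specialization of Theorem~\ref{th:approx-lipschitz}, once a global Lipschitz constant for \(f\) is extracted from the approximate eikonal condition. The hypothesis \(\bigl|\|\nabla f(\x)\|-1\bigr|<\epsilon\) translates into the uniform pointwise bound \(\|\nabla f(\x)\|<1+\epsilon\) for every \(\x\in\R^{n}\). Since \(f\) is differentiable with this uniformly bounded gradient, the mean value inequality yields that \(f\) is Lipschitz continuous with constant \(L=1+\epsilon\).

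With \(L=1+\epsilon\) in hand, I would simply invoke Theorem~\ref{th:approx-lipschitz}, whose conclusion is \(\|\nabla f(\x+\delta\,\nabla f(\x))-\nabla f(\x)\|\leq|\delta|LM\). Substituting the derived \(L\) yields the desired bound \((1+\epsilon)|\delta|M\), and the strict inequality in the statement is inherited from the strict inequality in the eikonal hypothesis.

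The main obstacle is not really in the corollary itself but in the underlying theorem, which the corollary packages. A direct route for that theorem is to apply Lipschitz smoothness (the Lipschitz continuity of \(\nabla f\) with constant \(M\)) to the two points \(\x\) and \(\x+\delta\,\nabla f(\x)\), obtaining \(\|\nabla f(\x+\delta\,\nabla f(\x))-\nabla f(\x)\|\leq M\,\|\delta\,\nabla f(\x)\|=M|\delta|\,\|\nabla f(\x)\|\), and then to bound \(\|\nabla f(\x)\|\leq L\) using Lipschitz continuity of \(f\). For the corollary this single step is exactly what is doing the work, so beyond identifying the correct \(L=1+\epsilon\) no further non-trivial calculation remains; the only subtlety to keep in mind is that the theorem's bound must be expressed in terms of the unnormalized gradient \(\nabla f(\x)\) rather than the normal \(\n\), which is why the pointwise gradient bound feeds in cleanly as a multiplicative factor.
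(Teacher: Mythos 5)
Your proposal is correct and matches the paper's own proof: both derive $\|\nabla f\|<1+\epsilon$ from the approximate eikonal condition, conclude $f$ is Lipschitz-continuous with constant $L=1+\epsilon$, and then substitute into Theorem~\ref{th:approx-lipschitz}. Your extra remarks about the mean value inequality and the inherited strict inequality are fine elaborations but do not change the route.
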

Given \(\n = \frac{\nabla f\left( \x \right)}{\lVert\nabla f\left( \x \right)\rVert}\), \(\hat{\n} = \frac{\nabla f\left( \hat{\x} \right)}{\lVert\nabla f\left( \hat{\x} \right)\rVert}\), and \(\hat{\x} = \x + d\left( \x \right)\n\), let \(\delta = \frac{d\left( \x \right)}{\lVert\nabla f\left( \x \right)\rVert}\), we can show
\(\left\lVert \hat{\n}- \n\right\rVert \leq \dfrac{1+\epsilon}{1-\epsilon}\left\lvert\delta\right\rvert M\) (\cf Appendix~\ref{sec:proof_normalized}).
In other words, the difference of \(\hat{\n}\) and \(\n\) is bounded by a small constant.
Thus we obtain the approximation in \eqref{eq:inverse_final}, which allows us to presample training samples \(\left\{ \hat{\x} \right\}\) and use precomputed \(\hat{f}_\textrm{GT}\left(\hat{\x}\right)\) or its derivatives (see Sec~\ref{sec:network}) for supervision.

\subsection{Network design and training}\label{sec:network}
The formulation of (inverse) implicit field in the previous section is based on three assumptions:
\begin{inparaenum}[(i)]
    \item \(f\) is smooth,
    \item \(d\) is small,
    \item \(f\) satisfies the eikonal constraint up to an error bound.
\end{inparaenum}
In this section, we describe our network architecture and training technique,
with emphasis on meeting these requirements.

\paragraph{Network architecture.} We propose to model \(f\) and \(\hat{d}\) with two \sirens denoted as \(\N^{\omega_{B}}\) and \(\N^{\omega_{D}}\),
where \(\omega_B\) and \(\omega_D\) refer to the frequency hyperparameter in the sine activation functions \(\x\mapsto\sin\left( \omega\x \right)\).
Evidently, as shown in \Figref{fig:omega}, \(\omega\) dictates an upper bound on the frequencies the network is capable of representing,
thereby it also determines the network's inductive bias for smoothness.
Correspondingly, we enforce the smoothness of \(f\) and detail-representing capacity of \(\hat{d}\)
using a smaller \(\omega_{B}\) and a larger \(\omega_{D}\), \eg \(\omega_{B} = 15\) and \(\omega_{D} = 60\).
Moreover, we add a scaled \(\tanh\) activation to the last linear layer of \(\N^{\omega_{D}}\),
\ie \(\alpha \tanh\left( \cdot \right)\), which ensures that the displacement distance is smaller than the constant \(\alpha\).
More insight about the choice of \(\omega_{B/D}\) is detailed in Section~\ref{app:omegas}.

\begin{figure}[t]
    \centering
    \begin{minipage}{0.58\textwidth}
        \centering
        \includegraphics[width=\textwidth, height=0.33\textwidth]{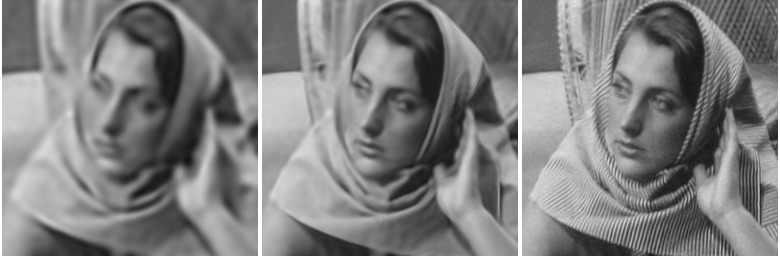}
        \caption{Smoothness control via \siren's frequency hyperparameter \(\omega\). Overfitting \siren to the right image with \(\omega=30\) (first) and \(\omega=60\) (middle) shows that smaller \(\omega\) leads to a smoother result.}
        \label{fig:omega}
    \end{minipage}\hfill
    \begin{minipage}{0.40\textwidth}
        \pgfplotsset{
            tick label style={font=\scriptsize},
            label style={font=\small},
            legend style={font=\footnotesize},
            }
            \begin{tikzpicture}
                \begin{axis}[xmin=0, xmax=0.2,
                    enlargelimits=false,
                    ylabel={\(\chi\)},
                    xlabel={\(\N^{\omega_{B}}\left( \x \right)\)},
                    legend pos = north east,
                    cycle list name=linestyles*,
                    ticklabel style={
            /pgf/number format/fixed
                    }, smooth, domain=0:0.5,
                    width=\textwidth, height=3.5cm]
                    \addplot {1/(1+(x/0.02)^4)};
                    \addplot {1/(1+(x/0.05)^4)};
                    \addplot {1/(1+(x/0.1)^4)};
                    \node [anchor=south west, font=\tiny] at (0.03,{1/(1+(0.03/0.02)^4)}) {\(\nu=0.02\)};
                    \node [anchor=south west, font=\tiny] at (0.05,{1/(1+(0.05/0.05)^4)}) {\(\nu=0.05\)};
                    \node [anchor=south west, font=\tiny] at (0.07,{1/(1+(0.07/0.1)^4)}) {\(\nu=0.1\)};
                \end{axis}
            \end{tikzpicture}
    \caption{Attenuation as a function of base SDF.}\label{fig:attenuation}
    \end{minipage}
\end{figure}

Networks containing high-frequency signals, \eg \(\N^{\omega_{D}}\), require large amounts of accurate ground truth data for supervision
to avoid running into optimization local minima \citep{park2020nerfies}.
Consequently, when dense and accurate ground truth SDF values are not available, high-frequency signals often create artifacts.
This is often the case in void regions when learning from point clouds,
as only implicit regularization and fuzzy supervision is applied (see the first and last terms of \eqref{eq:siren_loss}).
Hence, we apply an attenuation function \(\chi\left( \N^{\omega_{B}} \right) = \frac{1}{1+\left( \nicefrac{\N^{\omega_{B}}\left( \x \right)}{\nu} \right)^{4}}\) to subdue \(\N^{\omega_{D}}\) far from the base surface, where
\(\nu\) determines the speed of attenuation as depicted in \Figref{fig:attenuation}.

Combining the aforementioned components, we can compute the signed distance of the detailed shape at query point \(\x\) in two steps:
\begin{gather}
   f\left( \x \right) = \N^{\omega_{B}}\left( \x\right),\quad
   \hat{f}\left( \x \right)= \N^{\omega_{B}}\left( \x + \chi\left( f\left( \x \right) \right) \, \N^{\omega_{D}}\left( \x \right)\frac{\nabla f\left( \x \right)}{\|\nabla f\left( \x \right)\|}\right).\label{eq:netoutput}
\end{gather}

\paragraph{Training.}
We adopt the loss from \siren, which is constructed to learn SDFs directly from oriented point clouds
by solving the eikonal equation with boundary constraint at the on-surface points.
Denoting the input domain as \(\Omega\) (by default set to \(\left[ -1, 1 \right]^{3}\))
and the ground truth point cloud as \(\P = \{\left( \p_{i}, \n_{i} \right)\}\), the loss computed as in \eqref{eq:siren_loss}, where \(\lambda_{\left\{ 0,1,2,3 \right\}}\) denote loss weights:
\begin{align}
\textstyle
\nonumber
\L_{\hat{f}} = \sum_{\x\in\Omega}\lambda_0\Big|\|\nabla \hat{f}\left( \x \right)\| - 1\Big| \ & +
\sum_{\left( \p, \n \right)\in\P}\big( \lambda_1|\hat{f}\left( \p \right)| + \lambda_2\left( 1-\left\langle \nabla \hat{f}\left( \p \right), \n \right\rangle \right) \big) \\
& + \sum_{\x\in\Omega\backslash\P}\lambda_3\exp\left( -100\,\hat{f}\left( \x \right) \right).
\label{eq:siren_loss}
\end{align}

As the displacement and the attenuation functions depend on the base network,
it is beneficial to have a well-behaving base network when training the displacement (see Sec~\ref{sec:ablation}).
Therefore, we adopt a progressive learning scheme, which first trains \(\N^{\omega_B}\), and then gradually increase the impact of \(\N^{\omega_{D}}\).
Notably, similar frequency-based coarse-to-fine training techniques are shown to improve the optimization result in recent works \citep{park2020nerfies,hertz2021progressive}.

We implement the progressive training via symmetrically diminishing/increasing learning rates and loss weights for the base/displacement networks.
For brevity, we describe the procedure for loss weights only, and we apply the same to the learning rates in our implementation.
First, we train \(\N^{\omega_B}\) by substituting \(\hat{f}\) in the loss \eqref{eq:siren_loss} with \(f\),
resulting a base-only loss denoted \(\L_{f}\).
Then, starting from a training percentile \(T_{m}\in\left[ 0, 1 \right]\),
we combine \(\L_{f}\) and \(\L_{\hat{f}}\) via
\(\kappa\,\L_{f} + \left( 1-\kappa \right)\L_{\hat{f}} \quad \textrm{with} \quad \kappa=\dfrac{1}{2}\left( 1+\cos\left(\pi\frac{(t-T_{m})}{(1-T_{m})}\right)\right)\),
where \(t\in\left[ T_{m},\ 1 \right]\) denotes the current training progress.

\subsection{Transferable implicit displacement field.}\label{sec:transfer}
\begin{figure}[tbp]
    \centering
    \tikzset{
        font={\fontsize{6pt}{6}\selectfont}}
    \begin{tikzpicture}
    \node[inner sep=0pt](transferable)
    {\includegraphics[width=0.65\textwidth]{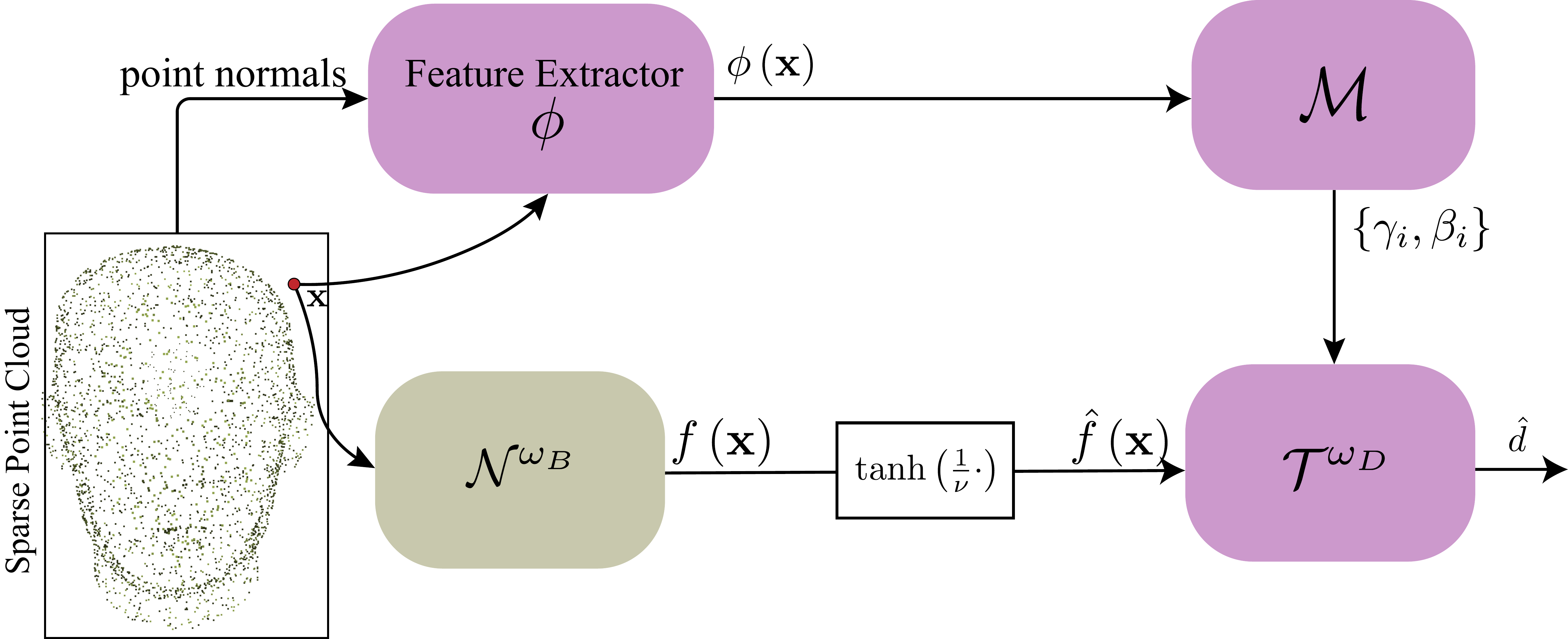}};
    \node[inner sep=0pt, below=0cm of transferable]  {Transferable IDF \(\T^{\omega_{D}}\)};
    \draw[dashed] ([xshift=0.1cm]transferable.north east) -- ([xshift=0.1cm]transferable.south east);
    \node[inner sep=0pt,right=0.5cm of transferable](direct)
    {\includegraphics[width=0.3\textwidth]{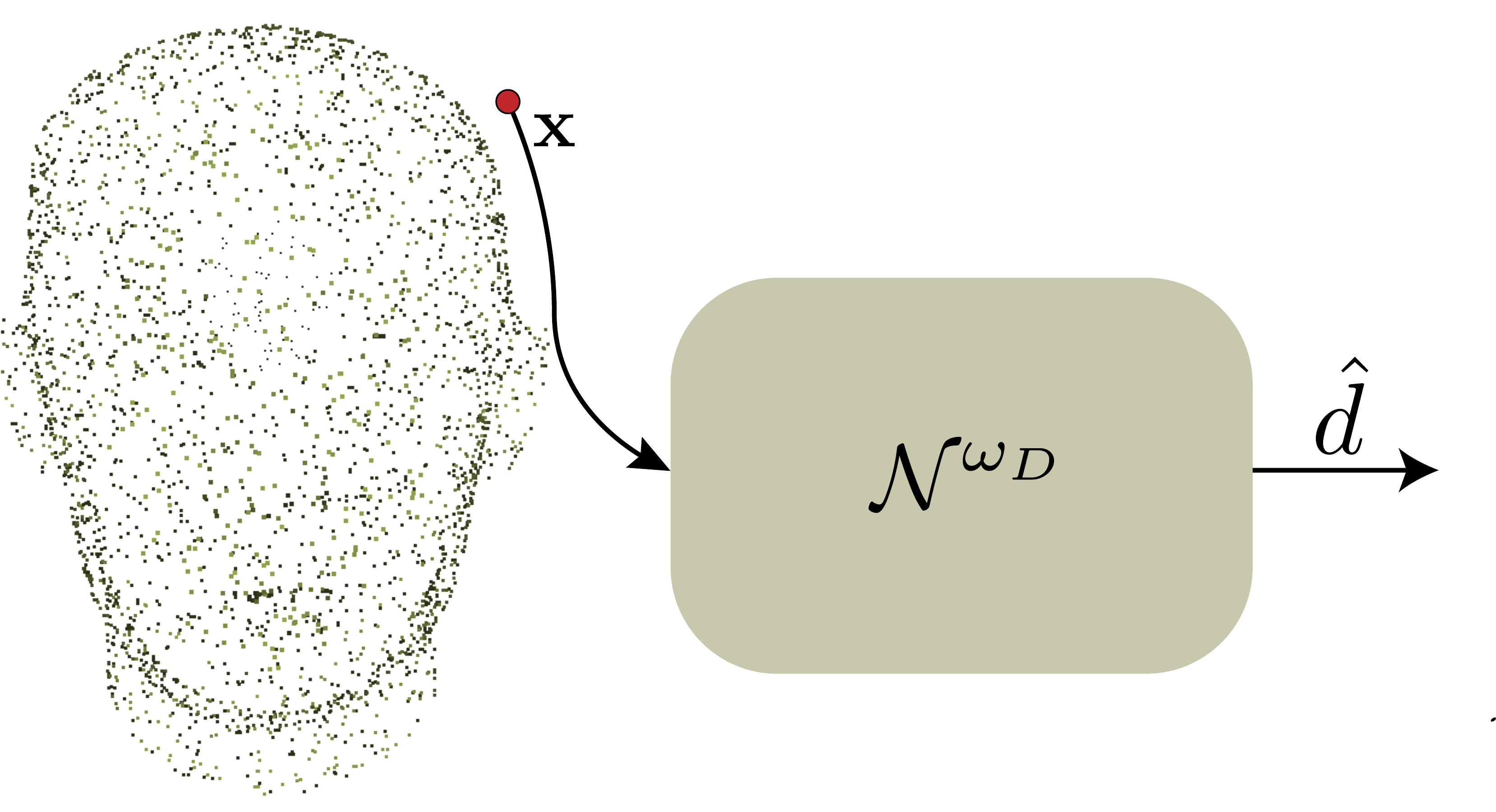}};
    \node[inner sep=0pt, below=0cm of direct]  {Non-transferable IDF \(\N^{\omega_{D}}\)};
    \end{tikzpicture}
    \caption{Illustrations for transferable and non-transferable implicit fields.
    The transferable modules are in pink and the shape-specific modules are in yellow.
    Instead of consuming the euclidean coordinates, the transferable displacement network takes a scale-and-translation invariant feature as inputs, which describes the relative position of the query point to the base shape.}
    \label{fig:transferable_net}\vspace{-1ex}
    \end{figure}
In classic displacement mapping,
the displacement is queried by the UV-coordinates from surface parameterization, which makes the displacement independent of deformations of the base surface.
We can achieve similar effect \emph{without parameterization} by learning \emph{query features}, which emulate the UV-coordinates to describe the location of the 3D query points \wrt the base surface.

We construct the query features using two pieces of information: \begin{inparaenum}[(i)]
    \item a global context descriptor, \(\phi\left( \x \right)\),
    describing the location of the query point in relation to the base surface in a semantically meaningful way,
    \item the base signed distance value \(f\left( \x \right)\),
    which gives more precise relative location with respect to the base surface.
\end{inparaenum}
Since both are differentiable \wrt the euclidean coordinates of the query point, we can still train \(\N^{\omega_{D}}\) using derivatives as in \eqref{eq:siren_loss}.

Our global context descriptor is inspired by Convolutional Occupancy Networks~\citep{Peng2020ECCV}.
Specifically, we project the sparse on-surface point features obtained using a conventional point cloud encoder onto a regular 3D (or 2D, \cf Sec~\ref{sec:eval_transfer}) grid,
then use a convolutional module to propagate sparse on-surface point features to the off-surface area,
finally obtain the query feature using bilinear interpolation.
We use normals instead of point positions as the input to the point cloud encoder, making the features scale-invariant and translation-invariant.
Note that ideally the features should also be rotation-invariant.
Nevertheless, as we empirically show later, normal features can in fact generalize under small local rotational deformations,
which is sufficient for transferring displacements between two roughly aligned shapes.
We leave further explorations in this direction for future work.

\(\N^{\omega_{D}}\) is tasked to predict the displacement conditioning on \(\phi\left( \x \right)\) and \(f\left( \x \right)\).
However, empirical studies~\citep{chan2020pi} suggest that \siren does not handle high-dimensional inputs well.
Hence, we adopt the FiLM conditioning~\citep{perez2018film,dumoulin2018feature} as suggested by~\citet{chan2020pi},
which feeds the conditioning latent vector as an affine transformation to the features of each layer.
Specifically,
a mapping network \(\mathcal{M}\) converts \(\phi\left( \x \right)\) to a set of C-dimensional frequency modulators and phase shifters
\(\left\{ \boldsymbol{\gamma}_{i}, \boldsymbol{\beta}_{i} \right\}\),
which transform the \(i\)-th linear layer to
 \(\left( \mathbf{1}+\frac{1}{2}\boldsymbol{\gamma}_{i} \right)\circ \left(\mathbf{W}_{i} \,\x + \mathbf{b}_{i} \right) + \boldsymbol{\beta}_{i}\),
where \(\mathbf{W}_{i}\) and \(\mathbf{b}_{i}\) are the parameters in the linear layer and
\(\circ\) denotes element-wise multiplication.
Finally, since \siren assumes inputs in range \(\left( -1, 1 \right)\),
we scale \(f\) using \(\bar{f}\left( \x \right) = \tanh\left( \frac{1}{\nu}f\left( \x \right) \right)\) to capture the variation close to the surface area, where \(\nu\) is the attenuation parameter described in \secref{sec:network}.

\Figref{fig:transferable_net} summarizes the difference between transferable and non-transferable displacement fields.
Formally, the signed distance function of the detailed shape in \eqref{eq:netoutput} can be rewritten as
\begin{equation}
\hat{f}\left( \x \right)=
\N^{\omega_{B}}\left( \x + \chi\left( f\left( \x \right) \right) \, \T^{\omega_{D}}\left( \bar{f}\left( \x \right), \ \mathcal{M}\left( \phi\left( \x \right) \right) \right)
\frac{\nabla f\left( \x \right)}{\|\nabla f\left( \x \right)\|}\right).\label{eq:transfer}
\end{equation}
\section{Results}\label{sec:result}
We now present the results of our method.
In Sec~\ref{sec:comparison}, we evaluate our networks in terms of geometric detail representation
by comparing with state-of-the-art methods on the single shape fitting task.
We then evaluate various design components in an ablation study in Sec~\ref{sec:ablation}.
Finally, we validate the transferability of the displacement fields in a detail transfer task in Sec~\ref{sec:eval_transfer}.
Extended qualitative and quantitative evaluations are included in \secref{app:result}.

\paragraph{Implementation details.}
By default, both the base and the displacement nets have 4 hidden layers with 256 channels each.
The maximal displacement \(\alpha\), attenuation factors \(\nu\), and the switching training percentile is set to \(T_{m}\) are set to \(0.05\), \(0.02\) and \(0.2\) respectively;
The loss weights \(\lambda_{\left\{ 0,1,2,3 \right\}}\) in \eqref{eq:siren_loss} are set to 5, 400, 40 and 50.
We train our models for 120 epochs using ADAM optimizer with initial learning rate of 0.0001 and decay to 0.00001 using cosine annealing~\citep{loshchilov2016sgdr} after finishing 80\%
of the training epochs.
We presample 4 million surface points per mesh for supervision.
Each training iteration uses 4096 subsampled surface points and 4096 off-surface points uniformly sampled from the \([-1, 1]^3\) bounding box.
To improve the convergence rate, we initialize \siren models by pre-training the base model \(\N^{\omega_{B}}\)
(and baseline \siren, \cf Sec~\ref{sec:comparison}) to a sphere with radius \(0.5\).
This initialization is optional for our training but is critical for baseline \sirens.

\paragraph{Data.}
We test our method using 16 high-resolution shapes, including 14 from Sketchfab~\citep{sketchfab} and 2 from Stanford 3DScanRepo~\citep{stanford3d}.
Our transferable displacement model is tested using shapes provided by \citet{berkiten2017learning}, \citet{yang2020facescape}, and \citet{Thingi10K}.

\subsection{Detail representation.}\label{sec:comparison}
\begin{table}[tbp]
\centering
\scriptsize
\setlength{\tabcolsep}{1.8mm}
\begin{tabular}[]{*{10}{c}}
\multicolumn{10}{c}{Chamfer distance  points to point distance \(\cdot 10^{-3}\) / normal cosine distance \(\cdot 10^{-2}\)}\\\midrule
& \makecell{Progressive\\FFN} & \makecell{NGLOD\\(LOD4)} & \makecell{NGLOD\\(LOD6)} & \makecell{\siren-3\\\(\omega=60\)} & \makecell{\siren-7\\\(\omega=30\)} & \makecell{\siren-7\\\(\omega=60\)} & \makecell{Direct\\Residual} & D-SDF &  Ours \\\midrule
SketchFab-16        & 5.47/3.77	&2.27/4.24	&1.35/1.97	&9.85/6.64	&4.85/2.56	& -	&181/59.0	&2.85/4.39 & \textbf{1.22}/\textbf{1.25}\\\bottomrule
\end{tabular}
\caption{Quantitative comparison.
Among the benchmarking methods, only NGLOD at LOD-6, using \(256\times\) number of parameters compared to our model, can yield results close to ours.
\siren models with larger \(\omega\) have convergence issues: despite our best efforts, the models still diverged in most cases. Please refer to Table~\ref{tab:chamfer_all} for a more comprehensive evaluation.
}
\label{tab:chamfer}
\end{table}

\newlength{\closeWidth}
\newlength{\normalWidth}
\begin{figure}[btp]
\hspace*{-1.5cm}\includegraphics[width=1.2\linewidth]{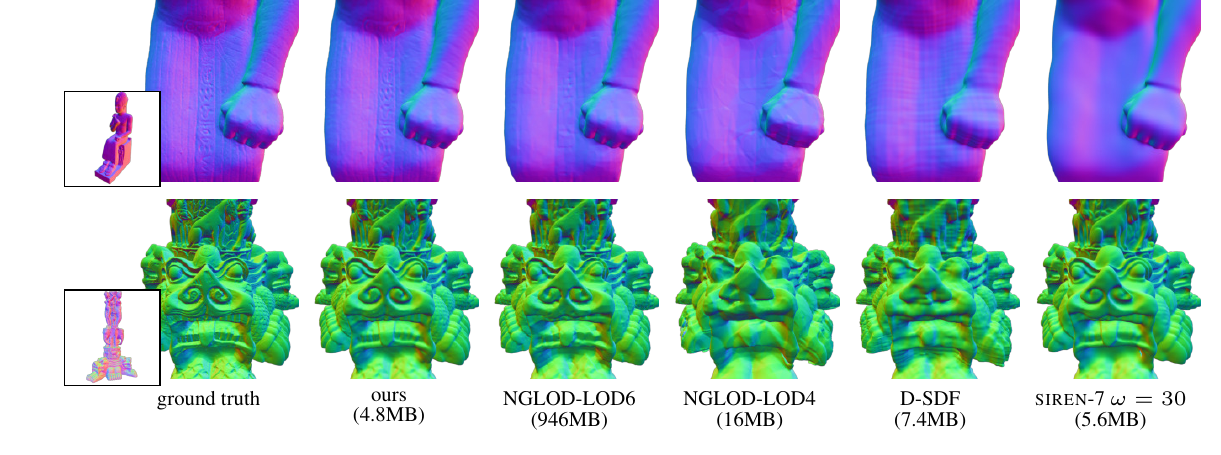}
\vspace{-5ex}\caption{Comparison of detail reconstruction (better viewed with zoom-in).
We show the best 5 methods and their model sizes according to Table~\ref{tab:chamfer}, more results are provided in \secref{sec:app-comparison}.}
\label{fig:comparison}
\end{figure}
We compare our approaches with 5 baseline methods.
\begin{inparaenum}[1)]
    \item FFN~\citep{tancik2020fourier} with \textsc{softplus} activation and 8 frequency bands progressively trained from coarse-to-fine, where
    a total of 8 hidden layers each of size 256 are used to match our model size; additionally we apply a skip-connection in the middle layer as proposed in DeepSDF~\citet{park2019deepsdf}.
    \item NGLOD~\citep{takikawa2021nglod} trained using 4 and 6 levels of detail (LODs)
    corresponding to \(64^{3}\) and \(256^{3}\) spatial resolution respectively,
    with LOD4 comparable with our model in terms of the number of parameters.
    \item baseline \siren, for which we trained three different variations in hope of overcoming training divergence issues;.
    \item direct residual, where we compose the signed distance value simply as the sum of base and displacement nets,
    \ie \(\hat{f}\left( \x \right)=\N^{\omega_{B}}\left( \x \right) + \N^{\omega_{D}}\left( \x \right)\).
    \item D-SDF (inspired by \citet{pumarola2021d,park2020deformable}), which represents the displacement as a \(\R^{3}\) vector, \ie $\hat{f}=f(\mathbf{x}+\Delta)$, where $\Delta\in\R^3$ is predicted in the second network. We follow network specs of D-Nerf~\cite{pumarola2020d}, which contains two 8-layer MLP networks with \textsc{ReLU} activation and positional encodings.
\end{inparaenum}
Among these, NGLOD, direct residual and D-SDF requires ground truth SDF for supervision, the rest are trained using our training loss.
Two-way point-to-point distance and normal cosine distance are computed as the evaluation metrics on
5 million points randomly sampled from meshes extracted using marching cubes with \(512^{3}\) resolution.

As shown in Table~\ref{tab:chamfer} and \Figref{fig:comparison},
our method outperforms the baseline methods with much higher reconstruction fidelity.
NGLOD with 6 LODs is the only method onpar with ours in terms of detail representation, however it requires storing more than 300 times as many as parameters as our model.
\siren networks with larger \(\omega\) have severe convergence issues even with sphere initialization (\cf Implementation Details) and gradient clipping.
Direct residual doesn't enforce displacement directions and produces large structural artifacts.
D-SDF yields qualitatively poor results, as the displacement net is unable to learn meaningful information (more analysis is shown in \secref{sec:app_analysis}).

\subsection{Ablation study}\label{sec:ablation}
We study the contributions of different design components,
namely the displacement scaling \(\alpha\tanh\), the attenuation function \(\chi\) and
the progressive training.

\setlength{\intextsep}{0pt}
\begin{wrapfigure}{r}{0.42\textwidth}
\centering\footnotesize
\begin{tabular}{*{5}{c}}
    \toprule
    \(\alpha\tanh\) & \(\chi\) & \makecell{prog.\\training}  & \makecell{average \\CD \(\cdot 10^{-3}\)}\\\midrule
    &                       &            & 1.44 \\
    \checkmark &            &            & 1.41 \\
    \checkmark & \checkmark &            & 1.38 \\
    \checkmark & \checkmark & \checkmark & 1.24 \\
    \bottomrule
\end{tabular}
\captionof{table}{Ablation study. Our model benefits from the proposed architectural and training designs,
yet it is also robust against variations.}
\label{tab:Ablation}
\end{wrapfigure}
As Table~\ref{tab:Ablation} shows, all the test modes converge within comparable range, even for the model with the least constraints.
This shows that our model is robust against violations of theoretical assumptions specified in Sec~\ref{sec:theory}.
At the same time, the performance rises with increasingly constrained architecture and progressive training,
suggesting that the proposed mechanisms further boost training stability.

Table~\ref{tab:hyperparameters} shows that in a reasonable range of \(\alpha\) and \(\nu\) there is very little variance in reconstruction quality, indicating the robustness \wrt parameter selection.
If \(\alpha\) is too small, the displacement may no longer be sufficient to correct the difference between the base and detailed surface, causing the slight increase of chamfer distances in the table for \(\alpha = \lbrace 0.01, 0.02\rbrace\).
When \(\nu\) is too large (0.2), \ie the high frequency signal is not suppressed in the void region, which leads to higher chamfer distances due to off-surface high-frequency noise.

\begin{table}
    \centering\small\setlength{\tabcolsep}{3pt}
    \begin{tabular}{c*{5}{c}:*{5}{c}}
    \toprule
& \multicolumn{5}{c}{\(\alpha\) test (with \(\nu = 0.02\))}&\multicolumn{5}{c}{\(\nu\) test with (\(\alpha=0.05\))}\\
& 0.01 & 0.02 & 0.05 & 0.1  & 0.2  & 0.01 & 0.02 & 0.05 & 0.1  & 0.2 \\\midrule
\makecell{point-to-point distance\(\left(\cdot10^{-3}\right)\)}& 1.178 & 1.171 & 1.147 & 1.146 & 1.149 & 1.146& 1.147& 1.147& 1.149& 1.152 \\
\makecell{normal cosine distance \(\left(\cdot10^{-2}\right)\)} & 1.525 & 1.490 & 1.252 & 1.251 & 1.260 & 1.254 & 1.253 & 1.251 & 1.250 & 1.274\\\bottomrule
    \end{tabular}
\caption{Study of the hyperparameters \(\alpha\) (left) and \(\nu\) (left). The reconstruction accuracy remains stable and highly competetive throughout hyperparameters variation. }\label{tab:hyperparameters}
\end{table}

\subsection{Transferability}\label{sec:eval_transfer}
We apply our method to detail transfer in order to validate the transferability of IDF.
Specifically, we want to transfer the displacements learned for a source shape to a different aligned target shape.
In the first test scenario, the base shape is provided and lies closely to the ground truth detailed surface.
In the second scenario, we are only provided with the detailed shapes and thus need to estimate the base and the displacements jointly.
The pipeline consists of the following steps:
\begin{inparaenum}[1)]
\item train \(\N^{\omega_{B}}\) by fitting the source shape (or the source base shape if provided),
\item train \(\T^{\omega_{D}}, \mathcal{M}\) and the query feature extractor \(\phi\) jointly by fitting the source shape using \eqref{eq:transfer} while keeping \(\N^{\omega_{B}}\) fixed,
\item train \(\N^{\omega_{B}}_{\textrm{new}}\) by fitting the target shape (or the target base shape if provided),
\item evaluate \eqref{eq:transfer} by replacing \(\N^{\omega_{B}}\) with \(\N^{\omega_{B}}_{\textrm{new}}\).
\end{inparaenum}
To prevent the base network from learning high-frequency details when the base is unknown,
we use \(\omega_{B}=5\) and three 96-channel hidden layers for \(\N^{\omega_{B}}\).

\begin{wrapfigure}{R}{0.20\linewidth}
    \includegraphics[width=\linewidth]{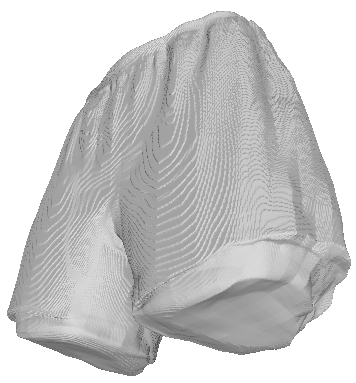}
    \caption{Detail transfer without scaling \(\bar{f}\).}\label{fig:failed_shorts}
\end{wrapfigure}
Example outputs for both scenarios are shown in \Figref{fig:detail}, where the base shapes are provided for the \emph{shorts} model.
We use a \(32^{3}\) and a \(128^{2}\) grid (for the frontal view), for the \emph{shorts} and \emph{face} model respectively in \(\phi\)
to extract the query features.
Our displacement fields, learned solely from the source shape, generate plausible details on the unseen target shape.
The transferred details contain high-frequency signals (\eg the eyebrows on the face),
which is challenging for explicit representations.
However, for the second scenario the performance degenerates slightly since the displacement field has to compensate errors stemming specifically from the base SDF.

In additional, we evaluate the design of the transferable IDF model by removing the mapping net and the convolutional context descriptor \(\phi\).
For the former case, we drop the FiLM conditioning and simply use concatenation of \(\phi\left( \x \right)\) and \(\bar{f}\left( \x \right)\) as the inputs to \(\mathcal{N}^{\omega_{D}}\); for the latter we directly use the normal at the sampled position as the context descriptor, \ie \(\phi\left( \x \right)=\nabla f\left( \x \right)\).
As \Figref{fig:dt_comparison} shows, the removal of mapping net and \(\phi\) lead to different degrees of feature distortions.
We also compare with the DGTS~\citep{hertz2020deep}, which fails completely at this example since it only consumes local intrinsic features.
Furthermore, the effect of scaling \(\bar{f}\) is shown in \Figref{fig:failed_shorts}, where using unscaled \(f\) as input to \(\T^{\omega_{D}}\) leads to artifacts at the boundary.

\begin{figure}\vspace{-2ex}
\hspace*{-1.5cm}\includegraphics[width=1.2\linewidth]{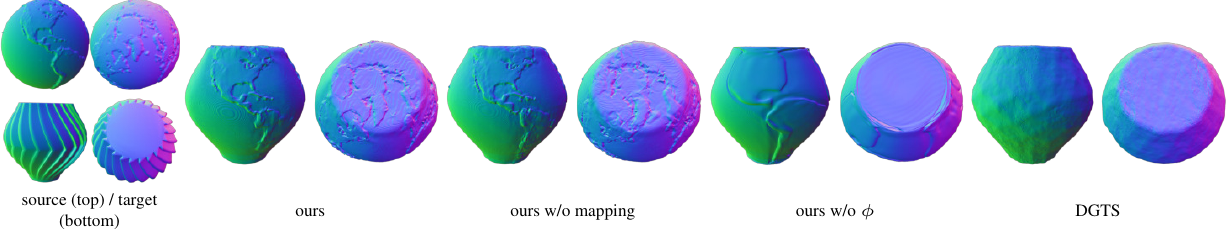}
\caption[Detail transfer using various methods.]{Transferring spatially-variant geometric details using various methods. Small to severe distortions are introduced when removing different components of the proposed transferable IDF. Thanks to the combination of global/local query feature, our method transfers spatially-variant details while \citet{hertz2020deep} can only handle spatially-invariant isometric details.}\label{fig:dt_comparison}
\end{figure}

\begin{figure}
\centering
\hspace*{-1.5cm}\includegraphics[width=0.6\linewidth]{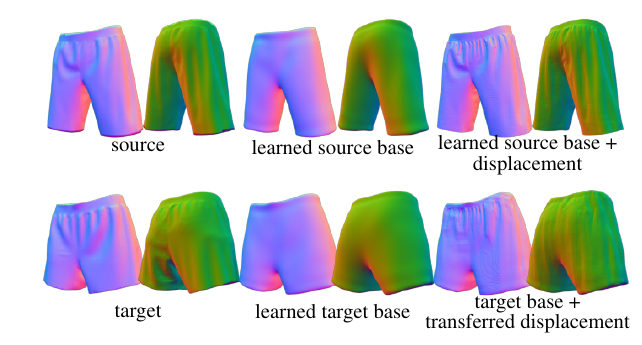}%
\includegraphics[width=0.6\linewidth]{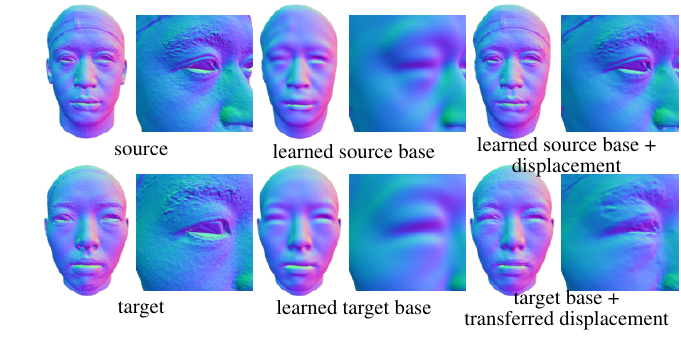}%
\caption{Transferable IDF applied to detail transfer. Left: the base shape is provided and lies closely to the ground truth detailed surface; right: only the detailed shapes are provided, thus the base and the displacements need to estimated jointly.}
\label{fig:detail}
\end{figure}
\let\closeWidth\relax
\let\normalWidth\relax
\section{Conclusion and limitations}\label{sec:conclusion}
In this paper, we proposed a new parameterization of neural implicit functions for detailed geometry representation.
Extending displacement mapping, a classic shape modeling technique, our formulation represents a given shape
by a smooth base surface and a high-frequency displacement field that offsets the base surface along its normal directions.
This resulting frequency partition enables the network to concentrate on regions with rich geometric details,
significantly boosting its representational power.
Thanks to the theoretically grounded network design, the high-frequency signal is well constrained,
and as a result our model shows convergence qualities compared to other models leveraging high-frequency signals, such as \siren and positional encoding.
Furthermore, emulating the deformation-invariant quality of classic displacement mapping,
we extend our method to enable transferability of the implicit displacements, thus making it possible to
use implicit representations for new geometric modeling tasks.

A limitation of our detail transfer application is the necessity to pre-align the two shapes. In future work, we consider exploring sparse correspondences as part of the input, which is a common practice in computer graphics applications, to facilitate subsequent automatic shape alignment.


\section*{Acknowledgments}
This work is sponsored by Apple's AI/ML PhD fellowship program.

\bibliographystyle{iclr2022_conference}
\bibliography{bibliography}

\begin{thebibliography}{61}
\providecommand{\natexlab}[1]{#1}
\providecommand{\url}[1]{\texttt{#1}}
\expandafter\ifx\csname urlstyle\endcsname\relax
  \providecommand{\doi}[1]{doi: #1}\else
  \providecommand{\doi}{doi: \begingroup \urlstyle{rm}\Url}\fi

\bibitem[ske(2021)]{sketchfab}
Sketchfab.
\newblock \url{https://sketchfab.com}, 2021.

\bibitem[sta(2021)]{stanford3d}
Stanford 3d scan repository.
\newblock \url{http://graphics.stanford.edu/data/3Dscanrep/}, 2021.

\bibitem[Basri et~al.(2020)Basri, Galun, Geifman, Jacobs, Kasten, and
  Kritchman]{basri2020frequency}
Ronen Basri, Meirav Galun, Amnon Geifman, David Jacobs, Yoni Kasten, and Shira
  Kritchman.
\newblock Frequency bias in neural networks for input of non-uniform density.
\newblock In \emph{International Conference on Machine Learning}, pp.\
  685--694. PMLR, 2020.

\bibitem[Berkiten et~al.(2017)Berkiten, Halber, Solomon, Ma, Li, and
  Rusinkiewicz]{berkiten2017learning}
Sema Berkiten, Maciej Halber, Justin Solomon, Chongyang Ma, Hao Li, and Szymon
  Rusinkiewicz.
\newblock Learning detail transfer based on geometric features.
\newblock \emph{Computer Graphics Forum}, 36\penalty0 (2):\penalty0 361--373,
  2017.

\bibitem[Biermann et~al.(2002)Biermann, Martin, Bernardini, and
  Zorin]{biermann2002cut}
Henning Biermann, Ioana Martin, Fausto Bernardini, and Denis Zorin.
\newblock Cut-and-paste editing of multiresolution surfaces.
\newblock \emph{ACM Transactions on Graphics (TOG)}, 21\penalty0 (3):\penalty0
  312--321, 2002.

\bibitem[Botsch et~al.(2010)Botsch, Kobbelt, Pauly, Alliez, and
  L{\'e}vy]{PMP_Book}
Mario Botsch, Leif Kobbelt, Mark Pauly, Pierre Alliez, and Bruno L{\'e}vy.
\newblock \emph{Polygon mesh processing}.
\newblock CRC press, 2010.

\bibitem[Chabra et~al.(2020)Chabra, Lenssen, Ilg, Schmidt, Straub, Lovegrove,
  and Newcombe]{chabra2020deep}
Rohan Chabra, Jan~E Lenssen, Eddy Ilg, Tanner Schmidt, Julian Straub, Steven
  Lovegrove, and Richard Newcombe.
\newblock Deep local shapes: Learning local sdf priors for detailed 3d
  reconstruction.
\newblock In \emph{European Conference on Computer Vision}, pp.\  608--625.
  Springer, Springer International Publishing, 2020.

\bibitem[Chan et~al.(2020)Chan, Monteiro, Kellnhofer, Wu, and
  Wetzstein]{chan2020pi}
Eric~R Chan, Marco Monteiro, Petr Kellnhofer, Jiajun Wu, and Gordon Wetzstein.
\newblock pi-gan: Periodic implicit generative adversarial networks for
  3d-aware image synthesis.
\newblock \emph{arXiv preprint arXiv:2012.00926}, 2020.

\bibitem[Chen et~al.(2012)Chen, Funkhouser, Goldman, and
  Shechtman]{chen2012non}
Xiaobai Chen, Tom Funkhouser, Dan~B Goldman, and Eli Shechtman.
\newblock Non-parametric texture transfer using meshmatch.
\newblock \emph{Technical Report Technical Report 2012-2}, 2012.

\bibitem[Chen et~al.(2020{\natexlab{a}})Chen, Liu, and Wang]{chen2020learning}
Yinbo Chen, Sifei Liu, and Xiaolong Wang.
\newblock Learning continuous image representation with local implicit image
  function.
\newblock \emph{arXiv preprint arXiv:2012.09161}, 2020{\natexlab{a}}.

\bibitem[Chen \& Zhang(2019)Chen and Zhang]{chen2019learning}
Zhiqin Chen and Hao Zhang.
\newblock Learning implicit fields for generative shape modeling.
\newblock In \emph{Proceedings of the IEEE/CVF Conference on Computer Vision
  and Pattern Recognition}, pp.\  5939--5948, 2019.

\bibitem[Chen et~al.(2020{\natexlab{b}})Chen, Tagliasacchi, and
  Zhang]{chen2020bsp}
Zhiqin Chen, Andrea Tagliasacchi, and Hao Zhang.
\newblock Bsp-net: Generating compact meshes via binary space partitioning.
\newblock In \emph{Proceedings of the IEEE/CVF Conference on Computer Vision
  and Pattern Recognition}, pp.\  45--54, 2020{\natexlab{b}}.

\bibitem[Chen et~al.(2021)Chen, Kim, Fisher, Aigerman, Zhang, and
  Chaudhuri]{chen2021decorgan}
Zhiqin Chen, Vladimir Kim, Matthew Fisher, Noam Aigerman, Hao Zhang, and
  Siddhartha Chaudhuri.
\newblock {DecorGAN}: 3d shape detailization by conditional refinement.
\newblock In \emph{Proceedings of the IEEE/CVF Conference on Computer Vision
  and Pattern Recognition}, 2021.

\bibitem[Cook(1984)]{cook1984shade}
Robert~L Cook.
\newblock Shade trees.
\newblock In \emph{Proceedings of the 11th annual conference on Computer
  graphics and interactive techniques}, pp.\  223--231, 1984.

\bibitem[Cook et~al.(1987)Cook, Carpenter, and Catmull]{cook1987reyes}
Robert~L Cook, Loren Carpenter, and Edwin Catmull.
\newblock The reyes image rendering architecture.
\newblock \emph{ACM SIGGRAPH Computer Graphics}, 21\penalty0 (4):\penalty0
  95--102, 1987.

\bibitem[Deng et~al.(2019)Deng, Lewis, Jeruzalski, Pons-Moll, Hinton, Norouzi,
  and Tagliasacchi]{deng2019nasa}
Boyang Deng, John~P Lewis, Timothy Jeruzalski, Gerard Pons-Moll, Geoffrey
  Hinton, Mohammad Norouzi, and Andrea Tagliasacchi.
\newblock Nasa: neural articulated shape approximation.
\newblock \emph{arXiv preprint arXiv:1912.03207}, 2019.

\bibitem[Deng et~al.(2020)Deng, Genova, Yazdani, Bouaziz, Hinton, and
  Tagliasacchi]{deng2020cvxnet}
Boyang Deng, Kyle Genova, Soroosh Yazdani, Sofien Bouaziz, Geoffrey Hinton, and
  Andrea Tagliasacchi.
\newblock Cvxnet: Learnable convex decomposition.
\newblock In \emph{Proceedings of the IEEE/CVF Conference on Computer Vision
  and Pattern Recognition}, pp.\  31--44, 2020.

\bibitem[Dumoulin et~al.(2018)Dumoulin, Perez, Schucher, Strub, Vries,
  Courville, and Bengio]{dumoulin2018feature}
Vincent Dumoulin, Ethan Perez, Nathan Schucher, Florian Strub, Harm~de Vries,
  Aaron Courville, and Yoshua Bengio.
\newblock Feature-wise transformations.
\newblock \emph{Distill}, 3\penalty0 (7):\penalty0 e11, 2018.

\bibitem[Frisken et~al.(2000)Frisken, Perry, Rockwood, and
  Jones]{frisken2000adaptively}
Sarah~F Frisken, Ronald~N Perry, Alyn~P Rockwood, and Thouis~R Jones.
\newblock Adaptively sampled distance fields: A general representation of shape
  for computer graphics.
\newblock In \emph{Proceedings of the 27th annual conference on Computer
  graphics and interactive techniques}, pp.\  249--254, 2000.

\bibitem[Genova et~al.(2019)Genova, Cole, Vlasic, Sarna, Freeman, and
  Funkhouser]{genova2019learning}
Kyle Genova, Forrester Cole, Daniel Vlasic, Aaron Sarna, William~T Freeman, and
  Thomas Funkhouser.
\newblock Learning shape templates with structured implicit functions.
\newblock In \emph{Proceedings of the IEEE/CVF International Conference on
  Computer Vision}, pp.\  7154--7164, 2019.

\bibitem[Genova et~al.(2020)Genova, Cole, Sud, Sarna, and
  Funkhouser]{genova2020local}
Kyle Genova, Forrester Cole, Avneesh Sud, Aaron Sarna, and Thomas Funkhouser.
\newblock Local deep implicit functions for 3d shape.
\newblock In \emph{Proceedings of the IEEE/CVF Conference on Computer Vision
  and Pattern Recognition}, pp.\  4857--4866, 2020.

\bibitem[Hao et~al.(2020)Hao, Averbuch-Elor, Snavely, and
  Belongie]{hao2020dualsdf}
Zekun Hao, Hadar Averbuch-Elor, Noah Snavely, and Serge Belongie.
\newblock Dualsdf: Semantic shape manipulation using a two-level
  representation.
\newblock In \emph{Proceedings of the IEEE/CVF Conference on Computer Vision
  and Pattern Recognition}, pp.\  7631--7641, 2020.

\bibitem[Hertz et~al.(2020)Hertz, Hanocka, Giryes, and Cohen-Or]{hertz2020deep}
Amir Hertz, Rana Hanocka, Raja Giryes, and Daniel Cohen-Or.
\newblock Deep geometric texture synthesis.
\newblock \emph{ACM Trans. Graph.}, 39\penalty0 (4), July 2020.
\newblock ISSN 0730-0301.
\newblock \doi{10.1145/3386569.3392471}.
\newblock URL \url{https://doi.org/10.1145/3386569.3392471}.

\bibitem[Hertz et~al.(2021)Hertz, Perel, Giryes, Sorkine-Hornung, and
  Cohen-Or]{hertz2021progressive}
Amir Hertz, Or~Perel, Raja Giryes, Olga Sorkine-Hornung, and Daniel Cohen-Or.
\newblock Progressive encoding for neural optimization.
\newblock \emph{arXiv preprint arXiv:2104.09125}, 2021.

\bibitem[Jiang et~al.(2020)Jiang, Sud, Makadia, Huang, Nie{\ss}ner, Funkhouser,
  et~al.]{jiang2020local}
Chiyu Jiang, Avneesh Sud, Ameesh Makadia, Jingwei Huang, Matthias Nie{\ss}ner,
  Thomas Funkhouser, et~al.
\newblock Local implicit grid representations for 3d scenes.
\newblock In \emph{Proceedings of the IEEE/CVF Conference on Computer Vision
  and Pattern Recognition}, pp.\  6001--6010, 2020.

\bibitem[Li \& Zhang(2021)Li and Zhang]{li2021dimnet}
Manyi Li and Hao Zhang.
\newblock D\({}^{\mbox{2}}\)im-net: Learning detail disentangled implicit
  fields from single images.
\newblock In \emph{Proceedings of the IEEE/CVF Conference on Computer Vision
  and Pattern Recognition}, 2021.

\bibitem[Liu et~al.(2020)Liu, Gu, Zaw~Lin, Chua, and
  Theobalt]{NEURIPS2020_b4b75896}
Lingjie Liu, Jiatao Gu, Kyaw Zaw~Lin, Tat-Seng Chua, and Christian Theobalt.
\newblock Neural sparse voxel fields.
\newblock In H.~Larochelle, M.~Ranzato, R.~Hadsell, M.~F. Balcan, and H.~Lin
  (eds.), \emph{Advances in Neural Information Processing Systems}, volume~33,
  pp.\  15651--15663. Curran Associates, Inc., 2020.
\newblock URL
  \url{https://proceedings.neurips.cc/paper/2020/file/b4b758962f17808746e9bb832a6fa4b8-Paper.pdf}.

\bibitem[Loshchilov \& Hutter(2016)Loshchilov and Hutter]{loshchilov2016sgdr}
Ilya Loshchilov and Frank Hutter.
\newblock Sgdr: Stochastic gradient descent with warm restarts.
\newblock \emph{arXiv preprint arXiv:1608.03983}, 2016.

\bibitem[Martel et~al.(2021)Martel, Lindell, Lin, Chan, Monteiro, and
  Wetzstein]{martel2021acorn}
Julien~NP Martel, David~B Lindell, Connor~Z Lin, Eric~R Chan, Marco Monteiro,
  and Gordon Wetzstein.
\newblock Acorn: Adaptive coordinate networks for neural scene representation.
\newblock \emph{arXiv preprint arXiv:2105.02788}, 2021.

\bibitem[Mescheder et~al.(2019)Mescheder, Oechsle, Niemeyer, Nowozin, and
  Geiger]{mescheder2019occupancy}
Lars Mescheder, Michael Oechsle, Michael Niemeyer, Sebastian Nowozin, and
  Andreas Geiger.
\newblock Occupancy networks: Learning 3d reconstruction in function space.
\newblock In \emph{Proceedings of the IEEE/CVF Conference on Computer Vision
  and Pattern Recognition}, pp.\  4460--4470, 2019.

\bibitem[Mildenhall et~al.(2020)Mildenhall, Srinivasan, Tancik, Barron,
  Ramamoorthi, and Ng]{mildenhall2020nerf}
Ben Mildenhall, Pratul~P Srinivasan, Matthew Tancik, Jonathan~T Barron, Ravi
  Ramamoorthi, and Ren Ng.
\newblock Nerf: Representing scenes as neural radiance fields for view
  synthesis.
\newblock In \emph{European Conference on Computer Vision}, pp.\  405--421.
  Springer, Springer International Publishing, 2020.

\bibitem[Niemeyer et~al.(2020)Niemeyer, Mescheder, Oechsle, and
  Geiger]{niemeyer2020differentiable}
Michael Niemeyer, Lars Mescheder, Michael Oechsle, and Andreas Geiger.
\newblock Differentiable volumetric rendering: Learning implicit 3d
  representations without 3d supervision.
\newblock In \emph{Proceedings of the IEEE/CVF Conference on Computer Vision
  and Pattern Recognition}, pp.\  3504--3515, 2020.

\bibitem[Ohtake et~al.(2003)Ohtake, Belyaev, Alexa, Turk, and
  Seidel]{ohtake2005multi}
Yutaka Ohtake, Alexander Belyaev, Marc Alexa, Greg Turk, and Hans-Peter Seidel.
\newblock Multi-level partition of unity implicits.
\newblock \emph{ACM Trans.\ Graph.}, 22\penalty0 (3):\penalty0 463–470, 2003.
\newblock ISSN 0730-0301.
\newblock \doi{10.1145/882262.882293}.
\newblock URL \url{https://doi.org/10.1145/882262.882293}.

\bibitem[Park et~al.(2019)Park, Florence, Straub, Newcombe, and
  Lovegrove]{park2019deepsdf}
Jeong~Joon Park, Peter Florence, Julian Straub, Richard Newcombe, and Steven
  Lovegrove.
\newblock Deepsdf: Learning continuous signed distance functions for shape
  representation.
\newblock In \emph{Proceedings of the IEEE/CVF Conference on Computer Vision
  and Pattern Recognition}, pp.\  165--174, 2019.

\bibitem[Park et~al.(2020{\natexlab{a}})Park, Sinha, Barron, Bouaziz, Goldman,
  Seitz, and Martin-Brualla]{park2020deformable}
Keunhong Park, Utkarsh Sinha, Jonathan~T Barron, Sofien Bouaziz, Dan~B Goldman,
  Steven~M Seitz, and Ricardo Martin-Brualla.
\newblock Deformable neural radiance fields.
\newblock \emph{arXiv preprint arXiv:2011.12948}, 2020{\natexlab{a}}.

\bibitem[Park et~al.(2020{\natexlab{b}})Park, Sinha, Barron, Bouaziz, Goldman,
  Seitz, and Martin-Brualla]{park2020nerfies}
Keunhong Park, Utkarsh Sinha, Jonathan~T. Barron, Sofien Bouaziz, Dan~B
  Goldman, Steven~M. Seitz, and Ricardo Martin-Brualla.
\newblock Deformable neural radiance fields.
\newblock \emph{arXiv preprint arXiv:2011.12948}, 2020{\natexlab{b}}.

\bibitem[Peng et~al.(2020)Peng, Niemeyer, Mescheder, Pollefeys, and
  Geiger]{Peng2020ECCV}
Songyou Peng, Michael Niemeyer, Lars Mescheder, Marc Pollefeys, and Andreas
  Geiger.
\newblock Convolutional occupancy networks.
\newblock In \emph{European Conference on Computer Vision (ECCV)}, Cham, August
  2020. Springer International Publishing.

\bibitem[Perez et~al.(2018)Perez, Strub, De~Vries, Dumoulin, and
  Courville]{perez2018film}
Ethan Perez, Florian Strub, Harm De~Vries, Vincent Dumoulin, and Aaron
  Courville.
\newblock Film: Visual reasoning with a general conditioning layer.
\newblock In \emph{Proceedings of the AAAI Conference on Artificial
  Intelligence}, 2018.

\bibitem[Pumarola et~al.(2020)Pumarola, Corona, Pons-Moll, and
  Moreno-Noguer]{pumarola2020d}
Albert Pumarola, Enric Corona, Gerard Pons-Moll, and Francesc Moreno-Noguer.
\newblock D-nerf: Neural radiance fields for dynamic scenes.
\newblock \emph{arXiv preprint arXiv:2011.13961}, 2020.

\bibitem[Pumarola et~al.(2021)Pumarola, Corona, Pons-Moll, and
  Moreno-Noguer]{pumarola2021d}
Albert Pumarola, Enric Corona, Gerard Pons-Moll, and Francesc Moreno-Noguer.
\newblock D-nerf: Neural radiance fields for dynamic scenes.
\newblock In \emph{Proceedings of the IEEE/CVF Conference on Computer Vision
  and Pattern Recognition}, pp.\  10318--10327, 2021.

\bibitem[Rahaman et~al.(2019)Rahaman, Baratin, Arpit, Draxler, Lin, Hamprecht,
  Bengio, and Courville]{rahaman2019spectral}
Nasim Rahaman, Aristide Baratin, Devansh Arpit, Felix Draxler, Min Lin, Fred
  Hamprecht, Yoshua Bengio, and Aaron Courville.
\newblock On the spectral bias of neural networks.
\newblock In \emph{International Conference on Machine Learning}, pp.\
  5301--5310. PMLR, 2019.

\bibitem[Saito et~al.(2019)Saito, Huang, Natsume, Morishima, Kanazawa, and
  Li]{saito2019pifu}
Shunsuke Saito, Zeng Huang, Ryota Natsume, Shigeo Morishima, Angjoo Kanazawa,
  and Hao Li.
\newblock Pifu: Pixel-aligned implicit function for high-resolution clothed
  human digitization.
\newblock In \emph{Proceedings of the IEEE/CVF International Conference on
  Computer Vision}, pp.\  2304--2314, 2019.

\bibitem[Saito et~al.(2020)Saito, Simon, Saragih, and Joo]{saito2020pifuhd}
Shunsuke Saito, Tomas Simon, Jason Saragih, and Hanbyul Joo.
\newblock Pifuhd: Multi-level pixel-aligned implicit function for
  high-resolution 3d human digitization.
\newblock In \emph{Proceedings of the IEEE/CVF Conference on Computer Vision
  and Pattern Recognition}, pp.\  84--93, 2020.

\bibitem[Sitzmann et~al.(2019)Sitzmann, Zollh{\"o}fer, and
  Wetzstein]{sitzmann2019scene}
Vincent Sitzmann, Michael Zollh{\"o}fer, and Gordon Wetzstein.
\newblock Scene representation networks: Continuous 3d-structure-aware neural
  scene representations.
\newblock \emph{arXiv preprint arXiv:1906.01618}, 2019.

\bibitem[Sitzmann et~al.(2020)Sitzmann, Martel, Bergman, Lindell, and
  Wetzstein]{sitzmann2020implicit}
Vincent Sitzmann, Julien Martel, Alexander Bergman, David Lindell, and Gordon
  Wetzstein.
\newblock Implicit neural representations with periodic activation functions.
\newblock In H.~Larochelle, M.~Ranzato, R.~Hadsell, M.~F. Balcan, and H.~Lin
  (eds.), \emph{Advances in Neural Information Processing Systems}, volume~33,
  pp.\  7462--7473. Curran Associates, Inc., 2020.
\newblock URL
  \url{https://proceedings.neurips.cc/paper/2020/file/53c04118df112c13a8c34b38343b9c10-Paper.pdf}.

\bibitem[Skorokhodov et~al.(2020)Skorokhodov, Ignatyev, and
  Elhoseiny]{skorokhodov2020adversarial}
Ivan Skorokhodov, Savva Ignatyev, and Mohamed Elhoseiny.
\newblock Adversarial generation of continuous images.
\newblock \emph{arXiv preprint arXiv:2011.12026}, 2020.

\bibitem[Sorkine \& Botsch(2009)Sorkine and Botsch]{DeformationTutorial:2009}
Olga Sorkine and Mario Botsch.
\newblock Tutorial: Interactive shape modeling and deformation.
\newblock In \emph{EUROGRAPHICS}, 2009.

\bibitem[Sorkine et~al.(2004)Sorkine, Cohen-Or, Lipman, Alexa, R{\"o}ssl, and
  Seidel]{sorkine2004laplacian}
Olga Sorkine, Daniel Cohen-Or, Yaron Lipman, Marc Alexa, Christian R{\"o}ssl,
  and H-P Seidel.
\newblock Laplacian surface editing.
\newblock In \emph{Proceedings of the 2004 Eurographics/ACM SIGGRAPH symposium
  on Geometry processing}, pp.\  175--184, 2004.

\bibitem[Takayama et~al.(2011)Takayama, Schmidt, Singh, Igarashi, Boubekeur,
  and Sorkine]{takayama2011geobrush}
Kenshi Takayama, Ryan Schmidt, Karan Singh, Takeo Igarashi, Tamy Boubekeur, and
  Olga Sorkine.
\newblock Geobrush: Interactive mesh geometry cloning.
\newblock \emph{Computer Graphics Forum}, 30\penalty0 (2):\penalty0 613--622,
  2011.

\bibitem[Takikawa et~al.(2021)Takikawa, Litalien, Yin, Kreis, Loop,
  Nowrouzezahrai, Jacobson, McGuire, and Fidler]{takikawa2021nglod}
Towaki Takikawa, Joey Litalien, Kangxue Yin, Karsten Kreis, Charles Loop, Derek
  Nowrouzezahrai, Alec Jacobson, Morgan McGuire, and Sanja Fidler.
\newblock Neural geometric level of detail: Real-time rendering with implicit
  {3D} shapes.
\newblock In \emph{Proceedings of the IEEE/CVF Conference on Computer Vision
  and Pattern Recognition}, 2021.

\bibitem[Tancik et~al.(2020)Tancik, Srinivasan, Mildenhall, Fridovich-Keil,
  Raghavan, Singhal, Ramamoorthi, Barron, and Ng]{tancik2020fourier}
Matthew Tancik, Pratul Srinivasan, Ben Mildenhall, Sara Fridovich-Keil, Nithin
  Raghavan, Utkarsh Singhal, Ravi Ramamoorthi, Jonathan Barron, and Ren Ng.
\newblock Fourier features let networks learn high frequency functions in low
  dimensional domains.
\newblock In H.~Larochelle, M.~Ranzato, R.~Hadsell, M.~F. Balcan, and H.~Lin
  (eds.), \emph{Advances in Neural Information Processing Systems}, volume~33,
  pp.\  7537--7547. Curran Associates, Inc., 2020.
\newblock URL
  \url{https://proceedings.neurips.cc/paper/2020/file/55053683268957697aa39fba6f231c68-Paper.pdf}.

\bibitem[Tretschk et~al.(2020)Tretschk, Tewari, Golyanik, Zollh{\"o}fer, Stoll,
  and Theobalt]{tretschk2020patchnets}
Edgar Tretschk, Ayush Tewari, Vladislav Golyanik, Michael Zollh{\"o}fer,
  Carsten Stoll, and Christian Theobalt.
\newblock Patchnets: Patch-based generalizable deep implicit 3d shape
  representations.
\newblock In \emph{European Conference on Computer Vision}, pp.\  293--309.
  Springer, Springer International Publishing, 2020.

\bibitem[Xu et~al.(2019)Xu, Zhang, and Xiao]{xu2019training}
Zhi-Qin~John Xu, Yaoyu Zhang, and Yanyang Xiao.
\newblock Training behavior of deep neural network in frequency domain.
\newblock In \emph{International Conference on Neural Information Processing},
  pp.\  264--274. Springer, 2019.

\bibitem[Xu(2018)]{xu2018understanding}
Zhiqin~John Xu.
\newblock Understanding training and generalization in deep learning by fourier
  analysis.
\newblock \emph{arXiv preprint arXiv:1808.04295}, 2018.

\bibitem[Yang et~al.(2020)Yang, Zhu, Wang, Huang, Shen, Yang, and
  Cao]{yang2020facescape}
Haotian Yang, Hao Zhu, Yanru Wang, Mingkai Huang, Qiu Shen, Ruigang Yang, and
  Xun Cao.
\newblock Facescape: a large-scale high quality 3d face dataset and detailed
  riggable 3d face prediction.
\newblock In \emph{Proceedings of the IEEE Conference on Computer Vision and
  Pattern Recognition (CVPR)}, 2020.

\bibitem[Yariv et~al.(2020)Yariv, Kasten, Moran, Galun, Atzmon, Ronen, and
  Lipman]{yariv2020multiview}
Lior Yariv, Yoni Kasten, Dror Moran, Meirav Galun, Matan Atzmon, Basri Ronen,
  and Yaron Lipman.
\newblock Multiview neural surface reconstruction by disentangling geometry and
  appearance.
\newblock \emph{Advances in Neural Information Processing Systems}, 33, 2020.

\bibitem[Yifan et~al.(2020)Yifan, Aigerman, Kim, Chaudhuri, and
  Sorkine-Hornung]{yifan2020neural}
Wang Yifan, Noam Aigerman, Vladimir~G Kim, Siddhartha Chaudhuri, and Olga
  Sorkine-Hornung.
\newblock Neural cages for detail-preserving 3d deformations.
\newblock In \emph{Proceedings of the IEEE/CVF Conference on Computer Vision
  and Pattern Recognition}, pp.\  75--83, 2020.

\bibitem[Ying et~al.(2001)Ying, Hertzmann, Biermann, and
  Zorin]{ying2001texture}
Lexing Ying, Aaron Hertzmann, Henning Biermann, and Denis Zorin.
\newblock Texture and shape synthesis on surfaces.
\newblock In \emph{Eurographics Workshop on Rendering Techniques}, pp.\
  301--312. Springer, 2001.

\bibitem[Zhou et~al.(2007)Zhou, Sun, Turk, and Rehg]{zhou2007terrain}
Howard Zhou, Jie Sun, Greg Turk, and James~M Rehg.
\newblock Terrain synthesis from digital elevation models.
\newblock \emph{IEEE transactions on visualization and computer graphics},
  13\penalty0 (4):\penalty0 834--848, 2007.

\bibitem[Zhou et~al.(2006)Zhou, Huang, Wang, Tong, Desbrun, Guo, and
  Shum]{zhou2006mesh}
Kun Zhou, Xin Huang, Xi~Wang, Yiying Tong, Mathieu Desbrun, Baining Guo, and
  Heung-Yeung Shum.
\newblock Mesh quilting for geometric texture synthesis.
\newblock In \emph{ACM SIGGRAPH 2006 Papers}, SIGGRAPH '06, pp.\  690–697,
  New York, NY, USA, 2006. Association for Computing Machinery.
\newblock ISBN 1595933646.
\newblock \doi{10.1145/1179352.1141942}.
\newblock URL \url{https://doi.org/10.1145/1179352.1141942}.

\bibitem[Zhou \& Jacobson(2016)Zhou and Jacobson]{Thingi10K}
Qingnan Zhou and Alec Jacobson.
\newblock Thingi10k: A dataset of 10,000 3d-printing models.
\newblock \emph{arXiv preprint arXiv:1605.04797}, 2016.

\end{thebibliography}

\appendix
\appendix
\section{Proofs}\label{sec:proof}
\setcounter{theorem}{0}
\setcounter{corollary}{0}
\begin{theorem}
If function \(f:\R^{n}\to\R\) is differentiable, Lipschitz-continuous with constant \(L\)
and Lipschitz-smooth with constant \(M\), then
\(\|\nabla f\left( \x+ \delta \, \nabla f\left( \x \right) \right)  - \nabla f\left( \x \right)\| \leq |\delta| LM \).
\end{theorem}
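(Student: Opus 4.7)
The plan is to apply the two Lipschitz hypotheses in sequence: first use Lipschitz-smoothness of $f$ (i.e., $M$-Lipschitz continuity of $\nabla f$) on the two points $\x$ and $\x + \delta\,\nabla f(\x)$ to get a bound in terms of $\|\delta\,\nabla f(\x)\|$, then use Lipschitz-continuity of $f$ to bound $\|\nabla f(\x)\|$ by $L$.

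Concretely, I would first recall that $L$-Lipschitz continuity of a differentiable $f:\R^n\to\R$ implies the pointwise bound $\|\nabla f(\y)\|\le L$ for every $\y\in\R^n$, which follows from the characterization of the Lipschitz constant as the essential supremum of $\|\nabla f\|$ (or, more elementarily, from taking the limit of the difference quotient along the direction of $\nabla f(\y)$). Next, I would apply the definition of Lipschitz-smoothness with constant $M$, namely $\|\nabla f(\y_1)-\nabla f(\y_2)\|\le M\|\y_1-\y_2\|$, to the points $\y_1=\x+\delta\,\nabla f(\x)$ and $\y_2=\x$, yielding
\begin{equation*}
\bigl\|\nabla f\!\left(\x+\delta\,\nabla f(\x)\right)-\nabla f(\x)\bigr\|\le M\,\bigl\|\delta\,\nabla f(\x)\bigr\|=M\,|\delta|\,\|\nabla f(\x)\|.
\end{equation*}
Finally, substituting $\|\nabla f(\x)\|\le L$ gives the desired bound $|\delta|\,LM$.

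There is no real obstacle: the only step that requires a moment of care is justifying $\|\nabla f\|\le L$ from Lipschitz continuity of $f$, which is standard. The corollary for approximate SDFs then follows immediately by replacing the bound $\|\nabla f\|\le L$ with $\|\nabla f\|\le 1+\epsilon$ coming from the relaxed eikonal condition $\bigl|\|\nabla f\|-1\bigr|<\epsilon$.
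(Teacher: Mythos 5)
Your proof is correct and matches the paper's argument step for step: apply Lipschitz-smoothness to the pair $\x$ and $\x+\delta\,\nabla f(\x)$, then bound $\|\nabla f(\x)\|\le L$ via Lipschitz-continuity. The extra sentence you add justifying $\|\nabla f\|\le L$ is a reasonable elaboration of a fact the paper simply asserts.
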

\begin{proof}
If a differentiable function \(f\) is Lipschitz-continuous with constant \(L\) and Lipschitz-smooth with constant \(M\),
then
\begin{gather}
\|\nabla f\left( \x \right)\|\leq L \label{eq:lipschitz}\\
\|\nabla f\left( \x \right) - \nabla f\left( \y \right)\|\leq M\|\x-\y\|.\label{eq:smooth}
\end{gather}
\begin{align*}
\|\nabla f\left( \x+\delta\, \nabla f\left( \x \right)\right) - \nabla f\left( \x \right)\| & \leq M \|\delta\,\nabla f\left( \x \right)\| &\textrm{by \eqref{eq:smooth}}\\
& \leq |\delta| LM & \textrm{by \eqref{eq:lipschitz}}
\end{align*}
\end{proof}

\begin{corollary}
If a signed distance function \(f\) satisfying the eikonal equation up to error \(\epsilon > 0\),
\(\Big|\|\nabla f\| -1\Big|<\epsilon\),
is Lipschitz-smooth with constant \(M\),
then \(\|\nabla f\left( \x+ \delta\, \nabla f\left( \x \right) \right)  - \nabla f\left( \x \right)\| < (1+\epsilon)|\delta|M \).
\end{corollary}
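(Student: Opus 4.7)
The plan is to derive the corollary as a direct application of Theorem 1, with the only work being to extract a Lipschitz-continuity constant from the approximate eikonal hypothesis. Specifically, I would first observe that the assumption \(\bigl|\|\nabla f(\x)\| - 1\bigr| < \epsilon\) gives the pointwise bound \(\|\nabla f(\x)\| < 1 + \epsilon\) for every \(\x \in \R^n\). For a differentiable function on \(\R^n\), a uniform bound on the gradient norm is equivalent to Lipschitz-continuity with the same constant (this follows from the mean value / fundamental theorem of calculus along line segments). Hence \(f\) is Lipschitz-continuous with constant \(L = 1 + \epsilon\).

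Next, I would invoke Theorem 1 with this specific choice \(L = 1 + \epsilon\) together with the hypothesis that \(f\) is Lipschitz-smooth with constant \(M\). The theorem directly yields
\[
\|\nabla f(\x + \delta \, \nabla f(\x)) - \nabla f(\x)\| \leq |\delta| L M = (1+\epsilon)\,|\delta|\, M,
\]
which is the desired inequality (with a non-strict bound, but strict as soon as the hypothesis \(\|\nabla f\| < 1 + \epsilon\) is strict, matching the statement).

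There is essentially no obstacle here: the only subtlety is justifying the step ``bounded gradient \(\Rightarrow\) Lipschitz with the same constant,'' which is a standard textbook fact but is worth stating explicitly so the reader sees why \(L\) may be taken to be \(1 + \epsilon\) rather than something larger. The proof is therefore expected to be just two or three lines: assert the gradient bound, invoke Lipschitz-continuity with constant \(1 + \epsilon\), and apply Theorem 1.
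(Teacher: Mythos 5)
Your proposal is correct and follows exactly the same route as the paper: read off the Lipschitz constant \(L = 1+\epsilon\) from the eikonal bound \(\|\nabla f\| < 1+\epsilon\), then apply Theorem~\ref{th:approx-lipschitz}. The only addition is your (correct) aside justifying the strictness and the gradient-bound-to-Lipschitz step, which the paper leaves implicit.
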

\begin{proof}
\(\Big|\|\nabla f\| -1\Big|<\epsilon \ \ \Rightarrow \ \ \|\nabla f\|<\epsilon+1\). This means \(f\) is Lipschitz-continuous with constant \(\epsilon+1\). Then by Theorem \ref{th:approx-lipschitz}, \( \|\nabla f\left( \x+ \delta \nabla f\left( \x \right) \right)  - \nabla f\left( \x \right)\| < |\delta|(1+\epsilon)M\).
\end{proof}
Finally we show the upper bound for the normalized gradient, \ie,
\begin{equation}
	\left\lVert \hat{\n}- \n\right\rVert \leq \dfrac{1+\epsilon}{1-\epsilon}\left\lvert\delta\right\rvert M, \label{eq:normalized_difference}
\end{equation}
where \(\n = \frac{\nabla f\left( \x \right)}{\|\nabla f\left( \x \right)\|}\), \(\hat{\n} = \frac{\nabla f\left( \hat{\x} \right)}{\|\nabla f\left( \hat{\x} \right)\|}\) and \(\hat{\x} = \x + d\left(\x\right)\n\) with \(d\left(\x\right)\) denoting the small displacement.
\begin{proof}\label{sec:proof_normalized}
\begin{align}
\left\lVert \hat{\n}- \n\right\rVert = \left\lVert \dfrac{\nabla f\left(\hat{\x}\right)}{\left\lVert\nabla f\left(\hat{\x}\right)\right\rVert} - \dfrac{\nabla f\left(\x\right)}{\left\lVert\nabla f\left(\x\right)\right\rVert}\right\rVert.
\end{align}
For brevity, we denote \(\nabla f\left(\hat{\x}\right)\) and \(\nabla f\left(\x\right)\) as \(\mathbf{u}\) and \(\mathbf{v}\). Without loss of generality, we assume \(\lVert\mathbf{u}\rVert \leq \lVert\mathbf{v}\rVert\).
Then
\begin{align}
\left\lVert \hat{\n}- \n\right\rVert =& \left\lVert \frac{\mathbf{u}}{\lVert \mathbf{u}\rVert} - \frac{\mathbf{v}}{\lVert \mathbf{v}\rVert} \right\rVert \\
\stackrel{(*)}{\leq}&\frac{1}{\|\mathbf{u}\rVert}\lVert\mathbf{u}-\mathbf{v}\rVert \label{eq:projection} \\
\leq& \frac{1}{1-\epsilon}\lVert\mathbf{u}-\mathbf{v}\rVert & \textrm{by Eikonal constraint}\\
= & \frac{1}{1-\epsilon} \left\Vert \nabla f\left(\hat{\x}\right) - \nabla f\left(\x\right)\right\rVert\\
= & \frac{1}{1-\epsilon} \left\Vert \nabla f\left(\x+\dfrac{d\left(\x\right)\nabla f\left(\x\right)}{\left\lVert\nabla f\left(\x\right)\right\rVert}\right) - \nabla f\left(\x\right)\right\rVert.
\end{align}
Since $\lvert d\left(\x\right)\rvert$ is a small and $\left\lVert\nabla f\left(\x\right)\right\rVert$ is close to 1, we can set $\delta = \dfrac{d\left(\x\right)}{\left\lVert\nabla f\left(\x\right)\right\rVert}$. Thereby using Corollary~\ref{th:approx-lipschitz}, we conclude
\begin{equation}
\frac{1}{1-\epsilon} \left\Vert \nabla f\left(\x+\dfrac{d\left(\x\right)\nabla f\left(\x\right)}{\left\lVert\nabla f\left(\x\right)\right\rVert}\right) - \nabla f\left(\x\right)\right\rVert \leq \frac{1+\epsilon}{1-\epsilon}\lvert \delta\rvert M,
\end{equation}
thus
\begin{equation}
\left\lVert \hat{\n}- \n\right\rVert \leq \frac{1+\epsilon}{1-\epsilon}\lvert \delta\rvert M.
\end{equation}
Eq.~\eqref{eq:projection} can be proved as follows
\begin{align}
\left\lVert \overbrace{\frac{\mathbf{u}}{\lVert\mathbf{u}\rVert}-\frac{\mathbf{v}}{\lVert\mathbf{v}\rVert}}^{\mathbf{d}}\right\rVert = \left\lVert \overbrace{\frac{\mathbf{u}}{\lVert\mathbf{u}\rVert} - \frac{\mathbf{v}}{\lVert\mathbf{u}\rVert}}^{\mathbf{d}'} + \overbrace{(\frac{\mathbf{v}}{\lVert\mathbf{u}\rVert} - \frac{\mathbf{v}}{\lVert\mathbf{v}\rVert})}^{{\mathbf{e}}}\right\rVert,
\end{align}
\begin{figure}
	\includegraphics[width=\linewidth]{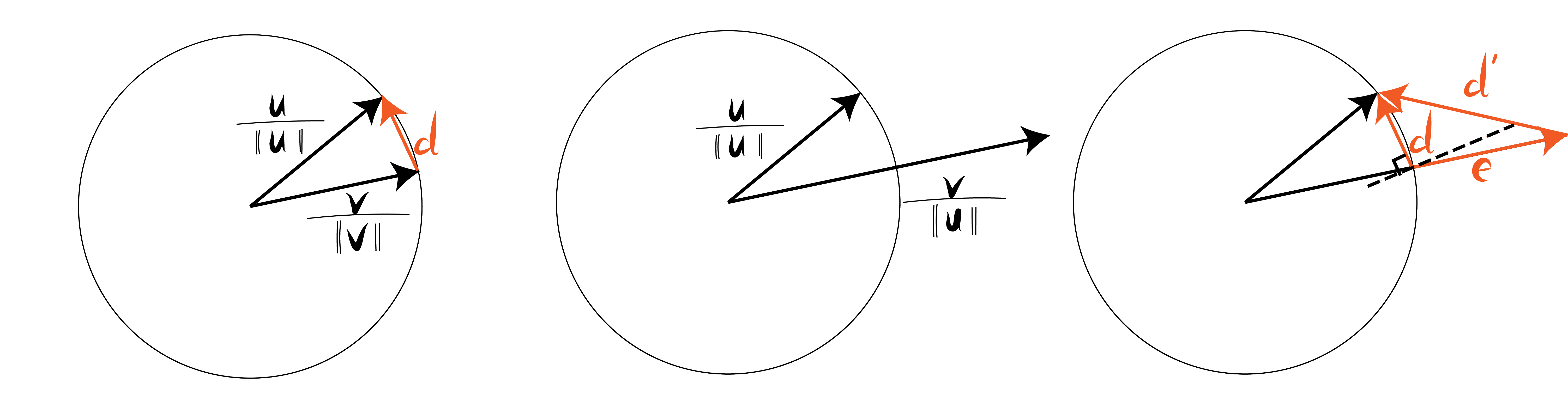}\caption{Sketch for proof.}\label{fig:projection_proof}
\end{figure}
which depicts the distance of the unit sphere projections of \(\mathbf{u}\) and \(\mathbf{v}\).
Obviously, as shown in \Figref{fig:projection_proof}, \(\lVert\mathbf{d}\rVert\leq \lVert \mathbf{d}'\lVert\) if \(\sphericalangle(\mathbf{d},\mathbf{e})\geq 90^\circ\).

Since $\mathbf{e}= (\frac{1}{\lVert\mathbf{u}\rVert}-\frac{1}{\lVert\mathbf{v}\rVert})\mathbf{v}$ and $(\frac{1}{\lVert\mathbf{u}\rVert}-\frac{1}{\lVert\mathbf{v}\rVert}) \geq 0),$ to show that $\sphericalangle \langle\mathbf{d}, \mathbf{e}\rangle \geq 90^{\circ}$ is the same as to show that $\sphericalangle \langle\mathbf{d}, \mathbf{v}\rangle \geq 90^{\circ}$.
Indeed:

\begin{align}
	\langle\mathbf{d}, \mathbf{v}\rangle &= \left \langle  \frac{\mathbf{u}}{\lVert\mathbf{u}\rVert}-\frac{\mathbf{v}}{\lVert\mathbf{v}\rVert}, \mathbf{v} \right\rangle&\\ &=\frac{\langle\mathbf{u}, \mathbf{v}\rangle }{\lVert\mathbf{u}\rVert}-\lVert\mathbf{v}\rVert&\\
	&\leq \frac{\Vert\mathbf{u}\rVert\lVert\mathbf{v}\rVert}{\lVert\mathbf{u}\rVert}-\lVert\mathbf{v}\rVert  & \textrm{by Cauchy-Schwarz inequality} \\
	&= 0 &
\end{align}
\end{proof}

\section{Additional experiments and results.}\label{app:result}
\subsection{Additional information to the comparison in Sec~\ref{sec:comparison}}\label{sec:app-comparison}

\subsubsection{Additional evaluation results}
Below we show the per-model Chamfer distances in addition to the average shown in Table~\ref{tab:chamfer}.
More qualitative results are shown in \Figref{fig:app_comparison}.
\begin{table}[hptb]
	\centering
	\scriptsize
	\setlength{\tabcolsep}{1.8mm}
\begin{tabular}[]{*{10}{c}}
	\multicolumn{10}{c}{Chamfer distance  points to point distance \(\cdot 10^{-3}\) / normal cosine distance \(\cdot 10^{-2}\)}\\\midrule
	model & \makecell{Progressive\\FFN} & \makecell{NGLOD\\(LOD4)} & \makecell{NGLOD\\(LOD6)} & \makecell{\siren-3\\\(\omega=60\)} & \makecell{\siren-7\\\(\omega=30\)} & \makecell{\siren-7\\\(\omega=60\)} & \makecell{Direct\\Residual} & D-SDF &  Ours \\\midrule
	angel & 6.00/4.19	& 2.28/2.87 &	1.47/1.43 &	9.54/5.47 &	5.57/2.85 &	-/-	& 251/87.9	& 3.36/3.70 &	\textbf{1.30}/\textbf{0.89} \\
	asian dragon & 4.96/4.02	& 1.66/4.05 &	1.03/1.91 &	6.13/5.28 &	3.65/2.36 &	7.24/4.03 &	269/92.7	& 2.84/1.80 &	\textbf{0.93}/\textbf{1.36} \\
	camera & 4.62/1.51	& 1.56/1.15 &	1.32/0.62 &	6.50/2.38 &	4.11/1.10 &	-/-	& 281/38.5	& 1.99/2.01 &	\textbf{1.25}/\textbf{0.34} \\
	compressor & 5.55/1.35	& 1.64/0.88 &	1.52/0.44 &	8.83/2.82 &	4.63/0.82 &	-/-	& 330/56.9	& 2.66/3.71 &	\textbf{1.39}/\textbf{0.23} \\
	dragon & 5.10/4.00	& 1.80/3.77 &	1.39/2.37 &	7.04/4.75 &	3.80/2.49 &	-/-	& 263/76.4	& 2.68/\textbf{1.21} &	\textbf{1.24}/1.50 \\
	dragon warrior & 5.94/7.25	& 2.46/8.12 &	1.52/5.21 &	7.27/8.45 &	3.68/4.83 &	5.96/8.01 & 	6.09/9.77	& 2.22/\textbf{4.46} &	\textbf{1.47}/4.56 \\
	dragon wing & 5.68/5.41	& 2.01/4.98 &	1.47/2.87 &	6.40/5.46 &	7.92/3.84 &	-/-	& 167/76.7	& 2.66/4.09 &	\textbf{1.31}/\textbf{1.49} \\
	dragon china & 6.20/2.31	& 2.32/1.75 &	\textbf{1.39}/1.01 &	9.15/4.35 &	6.20/2.29 &	-/-	& 272/69.7	& 3.31/7.06 &	1.40/\textbf{0.51} \\
	dragon cup & 4.39/3.13	& 1.83/3.57 &	1.24/1.17 &	7.67/4.69 &	5.50/2.15 &	-/-	& 173/86.9	& 3.21/4.93 &	\textbf{1.10}/\textbf{0.51} \\
	helmet & 4.79/1.02	& 1.70/0.871 &	1.40/0.410 &	8.40/2.72 &	5.19/0.83 &	-/-	& 263/94.6	& 2.59/1.99 &	\textbf{1.29}/\textbf{0.13} \\
	hunter & 4.17/4.66	& 2.03/5.08 &	1.18/2.08 &	8.96/6.66 &	3.40/2.58 &	-/-	& 3.39/3.64	& 2.61/4.89 &	\textbf{0.91}/\textbf{1.14} \\
	luyu & 7.22/4.29	& 2.19/3.30 &	1.53/1.81 &	8.98/6.83 &	6.16/3.16 &	-/-	& 206/94.6	& 5.10/9.76 &	\textbf{1.28}/\textbf{1.02} \\
	pearl dragon & 7.48/5.97	& 2.37/6.10 &	1.49/2.67 &	10.1/9.56 &	5.05/4.07 &	-/-	& 66.1 /49.8	& 3.26/6.24 &	\textbf{1.30}/\textbf{1.43} \\
	ramesses & 4.24/2.30	& 1.47/2.47 &	0.97/1.93 &	6.40/3.77 &	4.20/2.33 &	-/-	& 3.78/6.60	& 3.16/9.66 &	\textbf{0.92}/\textbf{1.58} \\
	Thai Statue & 5.23/7.01	& 7.16/16.7 &	1.30/4.77 &	6.27/7.48 &	3.81/4.17 &	-/-	& 117/45.2	& 1.76/\textbf{2.46} &	\textbf{1.07}/2.92 \\
	Vase Lion & 5.92/1.93	& 1.86/2.18 &	1.39/0.77 &	39.9/25.6 &	4.68/1.02 &	-/-	& 227/54.8	& 2.26/2.31 &	\textbf{1.31}/\textbf{0.43} \\
	AVG & 5.47/3.77	& 2.27/4.24 &	1.35/1.97 &	9.85/6.64 &	4.85/2.56 &	-/6.02	& 181/59.0	& 2.85/4.39 &	\textbf{1.22}/\textbf{1.25} \\\bottomrule
\end{tabular}\caption{Detailed quantitative evaluation (corresponding to Table~\ref{tab:chamfer}).}\label{tab:chamfer_all}
\end{table}

\subsubsection{Analysis}\label{sec:app_analysis}
\begin{figure}\centering\footnotesize
\begin{tabular}{c:c}
\includegraphics[height=20ex]{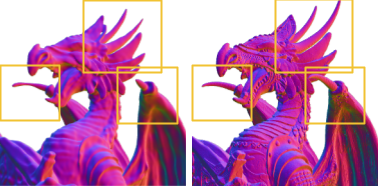} & \includegraphics[height=20ex]{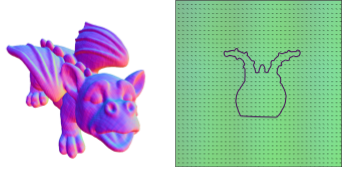} \\
a. direct residual & b. D-SDF
\end{tabular}
\caption{Examples of the direct residual and D-SDF models.}\label{fig:dsdf}
\end{figure}
We provide further diagnosis for the underwhelming results from the direct residual and the D-SDF models.
Direct residual composes the final SDF as a simple sum of the base SDF and a residual value.
We train this model also with the attenuation and scaled \(\tanh\) activation, and supervise both the base SDF and the composed SDF to stablize the base prediction.
However, as shown in \Figref{fig:dsdf}a, the composed SDF often contain large structural errors.
Further inspections show that during the training, such structural errors change quickly and the reconstruction oscillates in scale.
These indicate that the without the directional constraints on the displacement, the two networks are not sufficiently coupled and interfere with each other during training.

In D-SDF the displacement is not enforced to be along the normal direction. D-SDF yields competitive quantitative result, but as \Figref{fig:dsdf}b shows, the predicted displacement vectors are homogeneous, indicating that due to the lack of constraints the displacement network is not incentivised to return meaningful outputs.

\newlength{\closeWidth}
\newlength{\normalWidth}

\subsection{Discussion about \(\omega_B\) and \(\omega_D\)}\label{app:omegas}
As pointed out in Section~\ref{sec:network}, $\omega$ controls the upper bound of the frequency the network is capable of representing.
When using a single \siren, a larger $\omega$ can represent higher frequencies (more details) but at the same time tends to create high-frequency artifacts and issues with convergence.
Therefore, the choice of $\omega_B$ and $\omega_D$ is guided solely by two simple criteria:
\begin{inparaenum}[1)]
\item $\omega_B$ should be relatively small to provide a smooth and stably trainable base surface.
\item $\omega_D$ should be sufficiently large to represent the amount of details observed in the given shape.
\end{inparaenum}

Based on the empirical experience of the previous work~\cite{sitzmann2020implicit}, for a baseline \siren network, $\omega=30$ provides a good balance for stability and detail representation.
Based on this value, we choose $\omega_B=15$, so that the base is smoother than the input surface, thereby creating a necessity for the displacement field;  $\omega_D=60$ is chosen empirically as a value that is capable of representing the high-frequency signals exhibited in the high-resolutions shapes we tested.
If the input shape is very simple and smooth (\eg the shape in the first row of Figure~\ref{fig:omegas}), the base SDF with $\omega_B=15$ is already sufficient to represent the groundtruth surface, and the displacement has little impact.
In order to enforce a frequency separation as in the detail transfer application, one can reduce $\omega_B$ (\eg to 5, as shown in Figure~\ref{fig:omegas}(b)). For very detailed surfaces, $\omega_D$ needs to be high enough to enable sufficient resolution of the displacement field.
We choose $\omega_D=60$, which is suitable for all the tested shapes.
When keeping $\omega_D$ fixed, varying $\omega_B$ determines the smoothness of the base, therefore also decides how much correction the displacement network must deliver.
If $\omega_{B}$ is too small, the displacement network can become overburdened with the task, leading to faulty reconstruction and training instabilities.
\begin{figure}[htbp]
	\centering
\includegraphics[width=\linewidth]{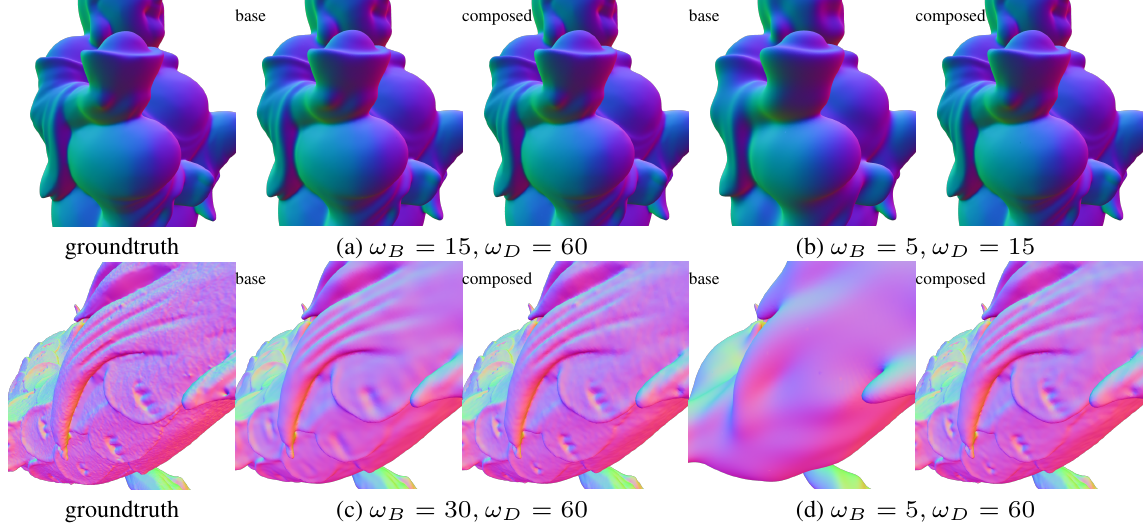}
\caption{Effect of \(\omega_B\) and \(\omega_D\) demonstrated on meshes with different level of details. For smooth low-resolution meshes such as the example shown in the first row, a small \(\omega\)  suffices to represent all the details in the given mesh. In this case, the base SDF (\eg \(\omega_{B}=15\)) alone can accurately express the surface geometry, rendering the displacement network unnecessary, as a result the composed surface is indistinguishable from the base surface, as shown in (a). To enforce detail separation, one can reduce \(\omega_{B}\), \eg to 5, as shown in (b). On the other hand, given detailed meshes (such as in the second row), $\omega_D$ ought to be large enough to be able to capture the high-frequency signals.
When keeping $\omega_D$ fixed, reducing $\omega_B$ increases the smoothness of the base SDF, as shown in (c) and (d), therefore also increases how much correction the displacement network must deliver.
In the extreme case, $\omega_{B}=0$, we would have a single high-frequency \siren, which is subject to convergence issues, as shown in Tab~\ref{tab:chamfer_all}.}
\label{fig:omegas}
\end{figure}

\subsection{Stress tests}
\begin{table}\centering\small
	\begin{tabular}{*{4}{c}}
		\multicolumn{4}{c}{\makecell{Chamfer distance \\ points to point distance / normal cosine distance \(\cdot 10^{-2}\)}}\\\toprule
		\makecell{training points} & noise $\sigma$ & ours & \makecell{poisson\\reconstruction }\\\midrule
		40000 & 0.002 & 1.07/7.54 &	1.08/7.78 \\
		40000 & 0.005 & 1.05/7.57 &	1.08/7.82 \\
		400000 & 0.002 & 1.00/6.01 & 1.04/6.63 \\
		400000 & 0.005 & 1.00/5.99 & 1.04/6.60 \\\bottomrule
	\end{tabular}
	\caption{Quantitative evaluation given sparse and noisy inputs.}\label{tab:stresstest}
\end{table}
While we used dense and clean sampled point clouds as inputs in the paper, as our focus is on detail representation, we examine the behavior of our method under noisy and sparse inputs.
Specifically, we train our network with 400 thousand and 40 thousand sampled points (10\% and 1\% of the amount in our main experiment, respectively), and added $\sigma=0.002$ and $\sigma=0.005$ Gaussian noise on both the point normals and the point positions.
From the qualitative and quantitative results shown in \Figref{fig:stresstest} and Table~\ref{tab:stresstest}, we can see that our method recovers geometric details better than Poisson reconstruction given sufficient training data (c.f. the left half of the figure).
When the training sample is sparse, our method tends to generate more high-frequency noise as a result of overfitting.
\begin{figure}\centering
\includegraphics[width=0.8\linewidth]{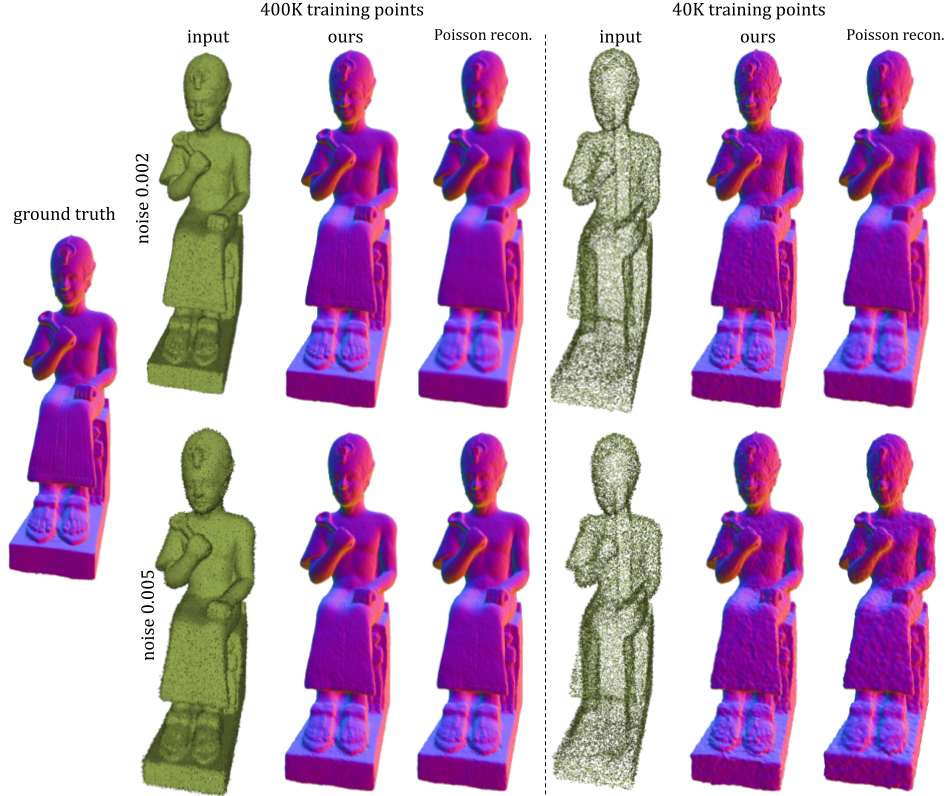}
\caption{Qualitative evaluation given sparse and noisy inputs.}\label{fig:stresstest}
\end{figure}

\subsection{Detail transfer.}
The proposed transferable IDF makes it possible to deploy \(\phi\) and \(\T^{\omega_{D}}\) a new base surface without any fine-tuning or adaptation.
In other words, to transfer details to multiple targets, we only need to fit a new base network \(\N^{\omega_{B}}_{\textrm{new}}\).
The composed SDF \(\hat{f}\left( \x \right)\)
can be computed simply by replacing \(\N^{\omega_{B}}\) with \(\N^{\omega_{B}}_{\textrm{new}}\).
We show the result of a multi-target detail transfer in \Figref{fig:detail_transfer}.
It is worth mentioning that even though the point extractor is trained on a single source shape,
it is able to generalize across different identities thanks to the built-in scale and translation invariance.

\subsection{Inference and training time}
Training the models as described in the paper takes 2412 seconds (around 40 minutes), which amounts to 120 epochs, i.e., around 20 seconds per epoch, where each epoch comprises 4 million surface samples and 4 million off-surface samples. In comparison, the original implementation of NGLOD6 takes 110 minutes to train 250 epochs, where each epoch comprises 200000 surface samples and 300000 off-surface samples. As for inference, using the same evaluation setup, NGLOD6 takes 193.9s for \(512^3\) points, while our inference takes 250.06 seconds for \(512^3\) query points. All benchmarking is performed on a single Nvidia 2080 RTX GPU. These timings could be improved by optimizing the model for performance, which we did not. For example, instead of using autodiff for computing the base surface normals, one could exploit the fact that the differentiation of \siren is also a \siren and explicitly construct a computation graph for computing the base surface normals.

\begin{figure}

\includegraphics[width=\linewidth]{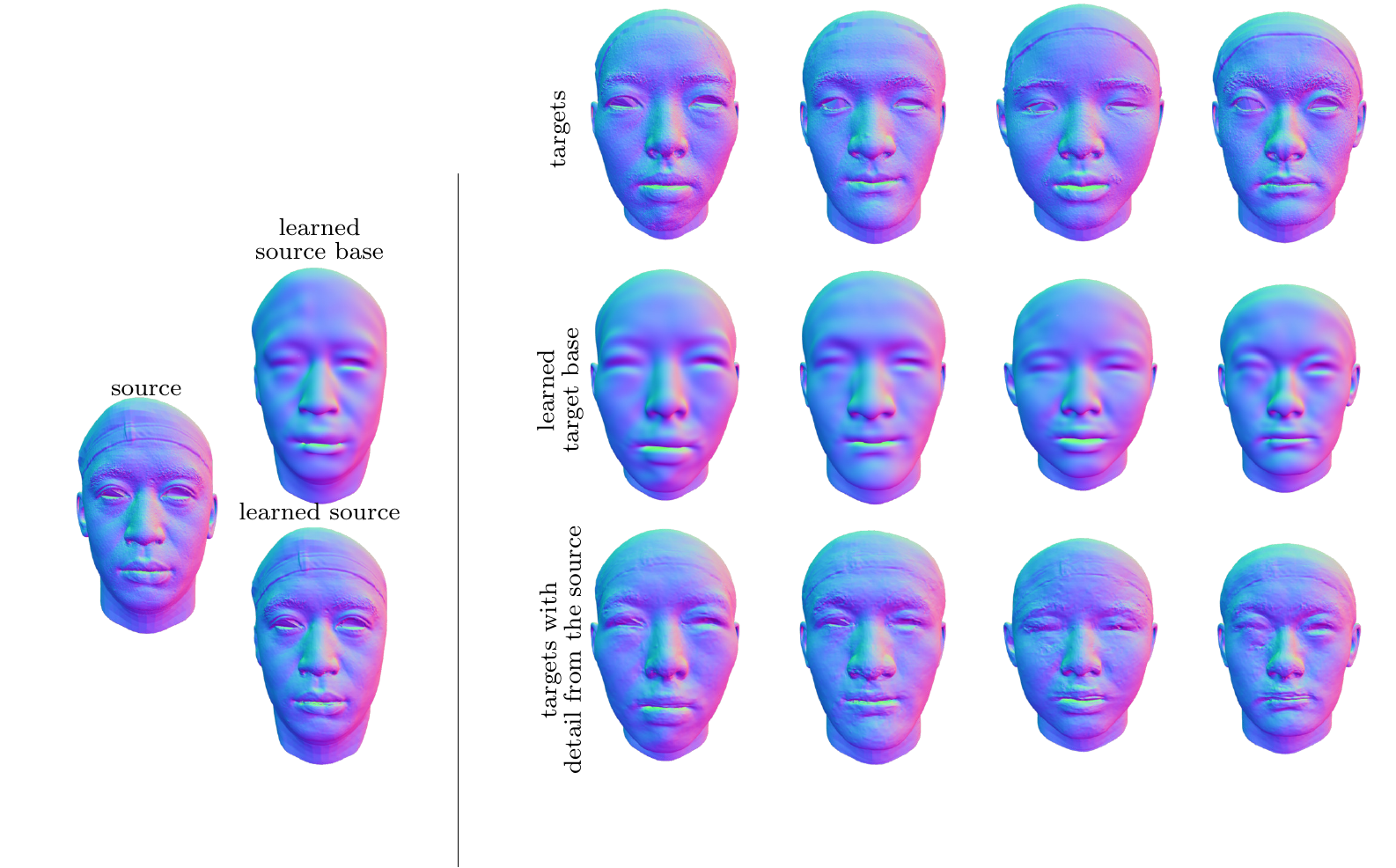}
\caption{Detail transfer to multiple similarly aligned target shapes.
Given a (detailed) source shape, we train a base network \(\N^{\omega_{B}}\) to represent the smooth base surface (see \emph{learned source base}),
as well as a feature extractor \(\phi\) and a transferrable displacement network \(\T^{\omega_{D}}\) to represent the surface details.
During detail transfer, we only need to fit the lightweight base network for each new target, while
the feature extractor and displacement net can be applied to the new shapes without adaptation.}\label{fig:detail_transfer}
\end{figure}

\begin{figure}[b]
\hspace*{-1.5cm}\includegraphics[width=1.2\linewidth]{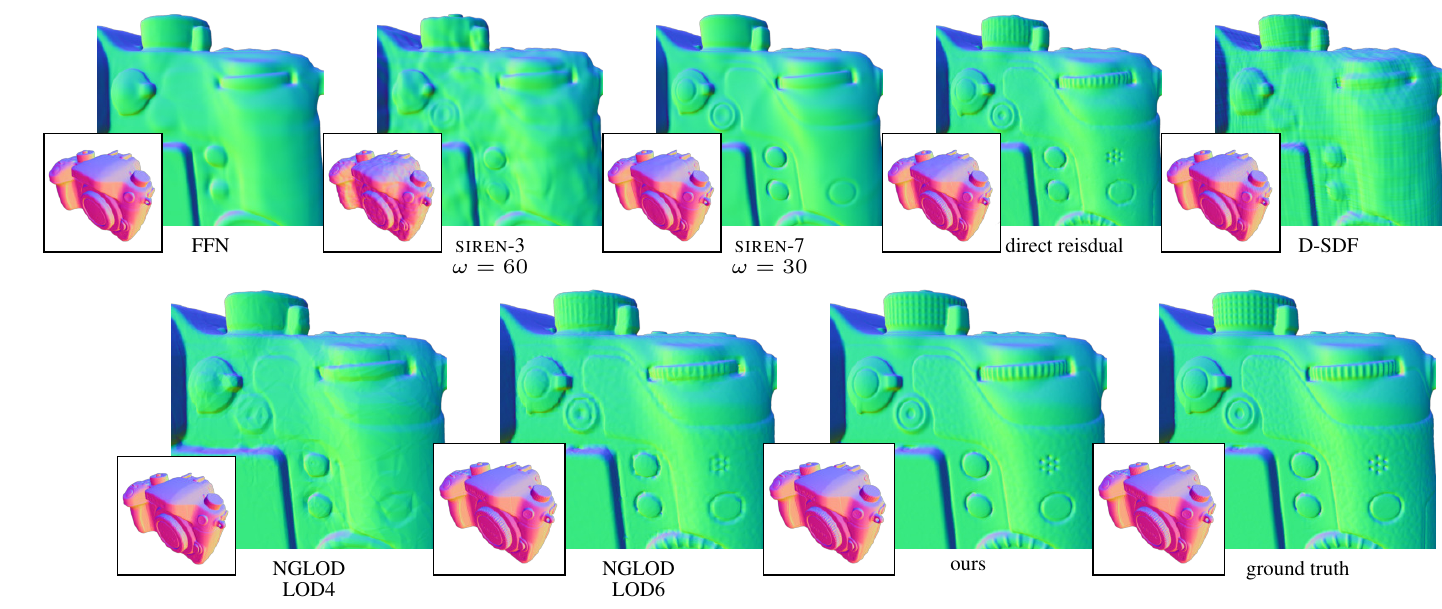}
\hspace*{-1.5cm}\includegraphics[width=1.2\linewidth]{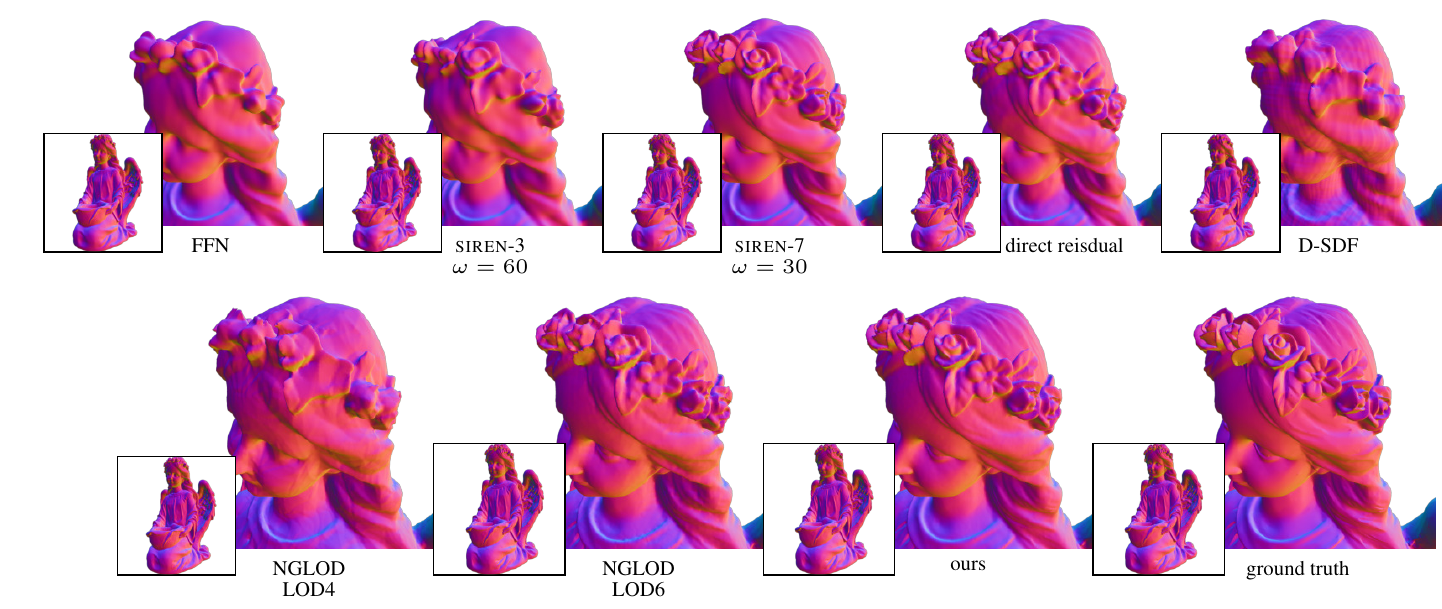}
\hspace*{-1.5cm}\includegraphics[width=1.2\linewidth]{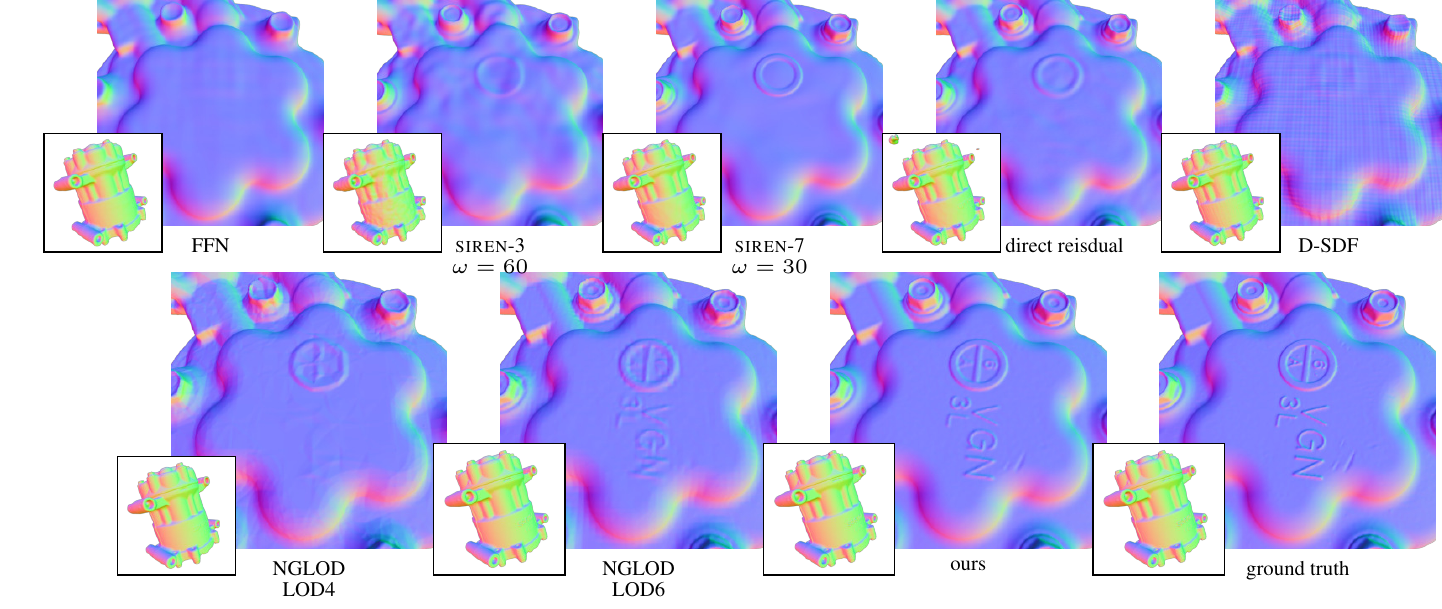}

	\caption{Comparison of detail reconstruction (better viewed with zoom-in). Methods that did not converge are omitted in the visual comparison.}\label{fig:app_comparison}
\end{figure}

\begin{figure}
\hspace*{-1.5cm}\includegraphics[width=1.2\linewidth]{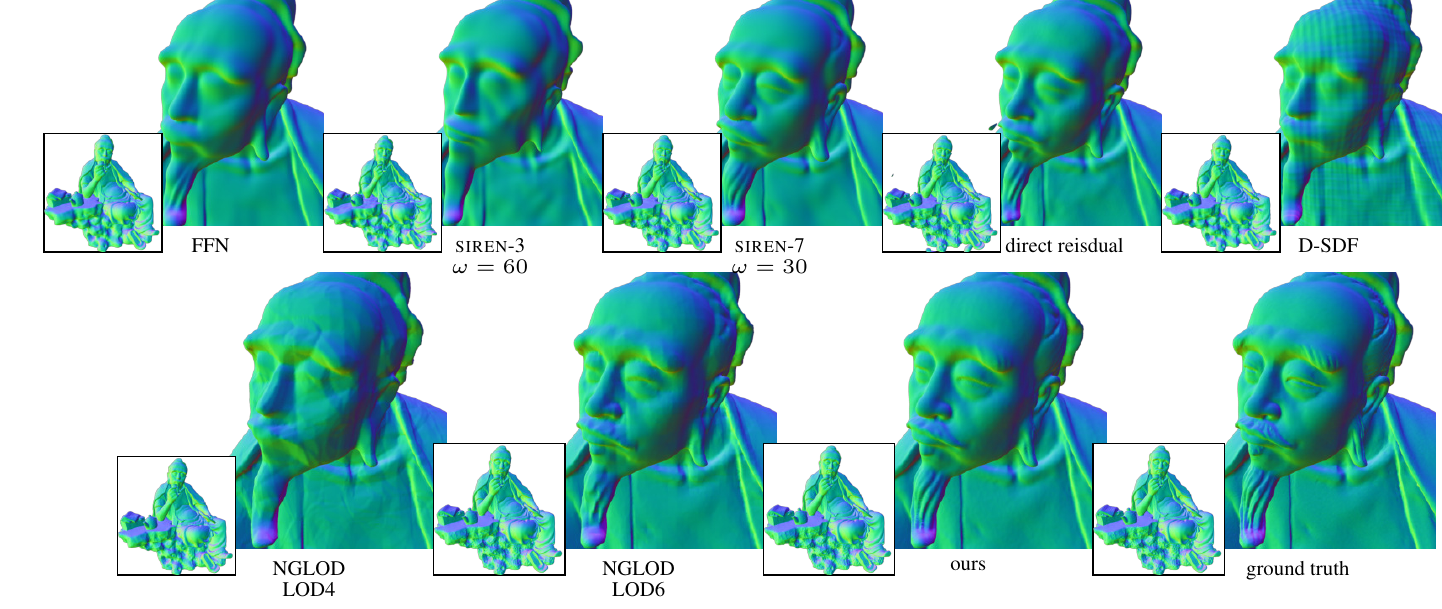}
\hspace*{-1.5cm}\includegraphics[width=1.2\linewidth]{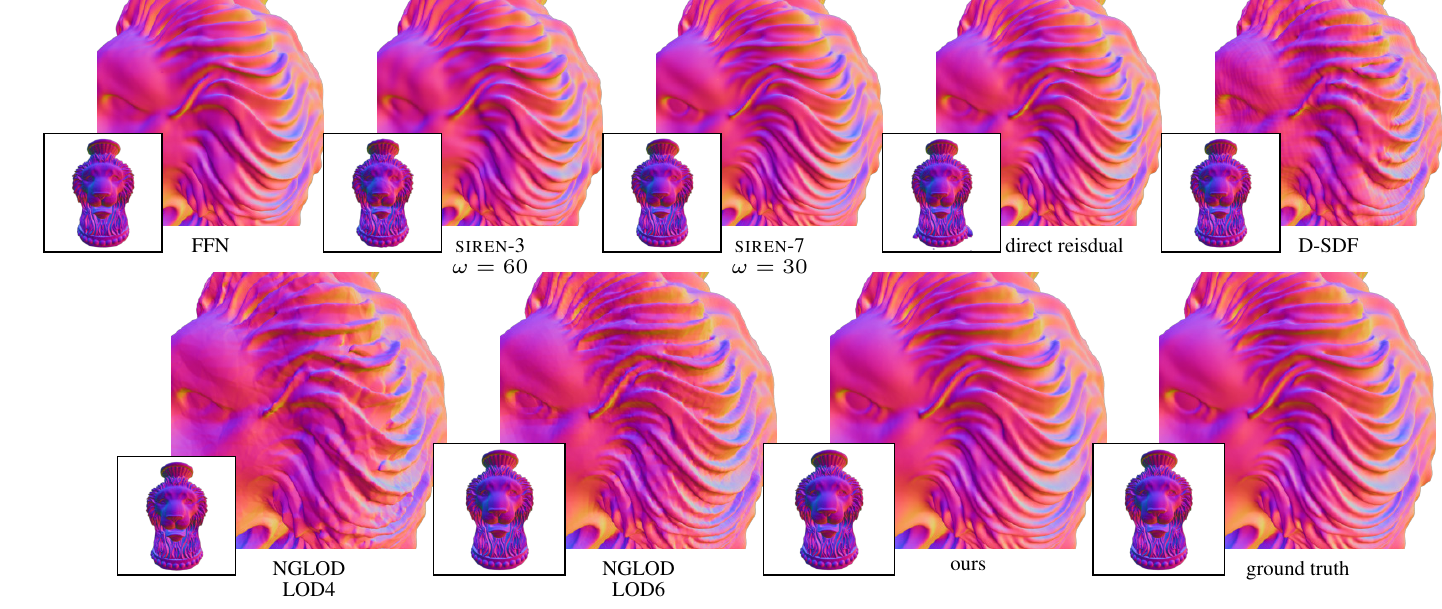}
\hspace*{-1.5cm}\includegraphics[width=1.2\linewidth]{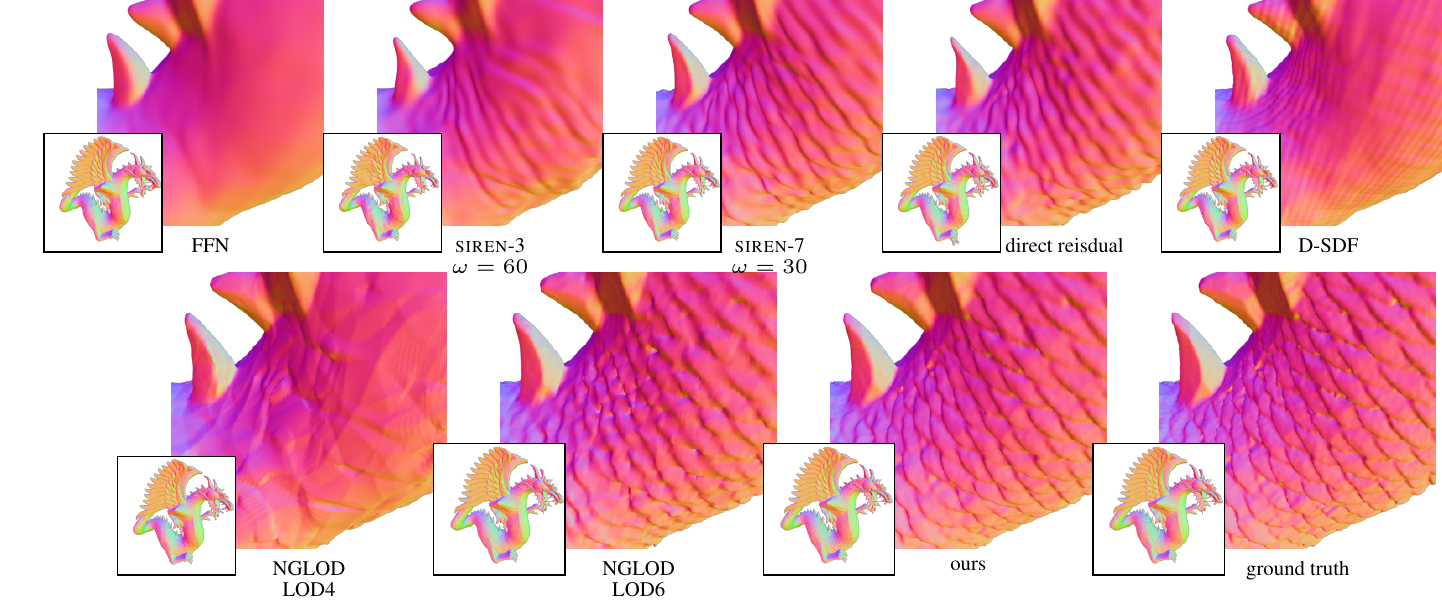}
\addtocounter{figure}{-1}
\caption{(Cont.) Comparison of detail reconstruction (better viewed with zoom-in). Methods that did not converge are omitted in the visual comparison.}
\end{figure}

\end{document}